\newtheorem{theorem}{Theorem}[section]
\newtheorem{proposition}{Proposition}
\theoremstyle{definition}
\newtheorem{definition}[theorem]{Definition}
\newtheorem{assumption}{Assumption}
\newcommand{\valpha}{\mbox{\boldmath $\alpha$}}
\newcommand{\vnu}{\mbox{\boldmath $\nu$}}
\newcommand{\vtau}{\mbox{\boldmath $\tau$}}
\newcommand{\vomega}{\mbox{\boldmath $\omega$}}
\newcommand{\vGamma}{\mathbf \Gamma}
\newcommand{\vDelta}{\mathbf \Delta}
\newcommand{\va}{\mathbf a}
\newcommand{\vb}{\mathbf b}
\newcommand{\vc}{\mathbf c}
\newcommand{\vd}{\mathbf d}
\newcommand{\vf}{\mathbf f}
\newcommand{\vh}{\mathbf h}
\newcommand{\vm}{\mathbf m}
\newcommand{\vo}{\mathbf o}
\newcommand{\vq}{\mathbf q}
\newcommand{\vr}{\mathbf r}
\newcommand{\vs}{\mathbf s}
\newcommand{\vu}{\mathbf u}
\newcommand{\vv}{\mathbf v}
\newcommand{\vC}{\mathbf C}
\newcommand{\vD}{\mathbf D}
\newcommand{\vF}{\mathbf F}
\newcommand{\vG}{\mathbf G}
\newcommand{\vH}{\mathbf H}
\newcommand{\vM}{\mathbf M}
\newcommand{\vR}{\mathbf R}
\newcommand{\vS}{\mathbf S}
\newcommand{\vU}{\mathbf U}
\newcommand{\vX}{\mathbf X}
\newcommand{\cB}{\mathcal{B}}
\newcommand{\cE}{\mathcal{E}}
\newcommand{\cI}{\mathcal{I}}
\newcommand{\cJ}{\mathcal{J}}
\newcommand{\ls}{\hspace{0em}}      
\newcommand{\pc}{{\phantom A}}      
\newcommand{\bmv}{{\bm v}}          
\newcommand{\bma}{{\bm a}}          
\newcommand{\bmomega}{{\bm \omega}} 
\newcommand{\bts}{{\bar\times^*}}   
\newcommand{\bbM}{\mathbb{M}}       
\newcommand{\rmv}{{\rm v}}          
\newcommand{\R}{\mathbb{R}}     
\newcommand{\SO}{\textrm{SO}}   
\newcommand{\so}{\mathfrak{so}} 
\newcommand{\SE}{\textrm{SE}}   
\newcommand{\se}{\mathfrak{se}} 
\DeclareMathOperator{\Ad}{Ad}   
\DeclareMathOperator{\ad}{ad}   
\DeclareMathOperator{\D}{D}     
\newcommand{\tp}{\tilde{\partial}} 
\newcommand{\paragraphbold}[1]{\textbf{#1}}
\title[Sensitivity Analysis of Moving-Base Systems]
{Efficient Geometric Linearization \\ of 
       Moving-Base Rigid Robot Dynamics} 
\author[M.P. Bos, S. Traversaro, D. Pucci and A. Saccon]{}
\subjclass{Primary: 70E55 (Dynamics of multibody systems), 22Exx (Lie groups), 93-xx(Systems theory; control), 65-xx (Numerical analysis)}
 \keywords{Sensitivity analysis, dynamics linearization, moving-base system, differential geometry, singularity-free, analytical derivatives, multibody dynamics, recursive algorithms, forward dynamics, inverse dynamics.}
 \email{martijn.paulus.bos@outlook.com}
 \email{silvio.traversaro@iit.it}
 \email{daniele.pucci@iit.it}
 \email{a.saccon@tue.nl}
\thanks{$^*$ Corresponding author: Alessandro Saccon}
\thanks{$^1$ This work was partly performed while the author was affiliated to the Eindhoven University of Technology}
\begin{document}
\maketitle

\centerline{\scshape Martijn Bos$^1$}
\medskip
{\footnotesize
 \centerline{Smart Robotics}
   \centerline{De Maas 8, 5684 PL Best, the Netherlands}
} 

\medskip

\centerline{\scshape Silvio Traversaro, Daniele Pucci}
\medskip
{\footnotesize
 \centerline{Artificial Mechanical Intelligence research line, Istituto Italiano di Tecnologia}
   \centerline{Via S. Quirico 19D, 16163 Genoa, Italy}
}

\medskip

\centerline{\scshape Alessandro Saccon$^*$}
\medskip
{\footnotesize
 \centerline{Department of Mechanical Engineering, Eindhoven University of Technology}
   \centerline{Groene Loper 3, PO Box 513, 5600 MB Eindhoven, the Netherlands}
}

\bigskip
\centerline{\emph{
Dedicated to Professor Tony Bloch on the occasion of his 65th birthday}
}
\bigskip

 \centerline{(Communicated by the associate editor name)}


\begin{abstract}
The linearization of the equations of motion 
of a robotics system
about a given state-input trajectory, including a controlled equilibrium state, is a valuable tool 
for model-based planning, closed-loop control, gain tuning, and state estimation. 
Contrary to the case of fixed based manipulators
with prismatic or rotary joints, the state space of moving-base robotic systems such as humanoids, quadruped robots, or aerial manipulators 
cannot be globally parametrized by a finite number of independent coordinates. 
This impossibility is a direct consequence of the fact that the state of these systems includes the system's global orientation, formally described as an element of the special orthogonal group SO(3).
As a consequence, obtaining the linearization of the equations of motion for these systems is typically resolved, from a practical perspective, by locally parameterizing the system's attitude by means of, e.g., Euler or Cardan angles.
This has the drawback, however, of introducing artificial parameterization singularities and extra derivative computations. 
In this contribution, we show that it is actually possible to define a notion of linearization that does not require the use of a local parameterization for the system's orientation, obtaining a mathematically elegant, recursive, and singularity-free linearization for moving-based robot systems.
Recursiveness, in particular, is obtained by proposing a nontrivial modification of
existing recursive algorithms
to allow for 
computations
of the geometric derivatives of the inverse dynamics and the inverse of the mass matrix of the robotic system.
The correctness of the proposed algorithm is validated by means of a numerical comparison with the result obtained via geometric finite difference.
\end{abstract}


\section{Introduction}
\label{chp:introduction}

This section provides the motivation, literature review, and contribution of this book chapter, as well as the chapter outline.

\subsection{Motivation}\label{sec:motivation}

Due to increase of computational power
in combination with advances in
computational efficiency of dynamics and optimization solvers, 
the model-based control of 
complex robot systems
such as 
humanoids and quadrupeds
is
more and more 
making use of
advanced methods such as
optimal whole-body control 
\cite{mason2014full, pucci2016automatic, marco2016automatic, Kheddar2019comanoid},
model predictive control (MPC), and 
\cite{koenemann2015whole, geoffroy2014inverse, farshidian2017real}, 
offline/online trajectory planning 
\cite{posa2014direct, tassa2012synthesis}. 

Numerical optimization methods, such as optimal planners or MPC strategies, in particular, 
typically require the computation of the sensitivity of state trajectory
with respect to variation of the input or initial conditions (see, e.g., \cite{Docquier2019multibody} and reference 
therein for optimal control of multibody systems), other than
the efficient computation of the dynamics. This 
entails the computation of
the linearization of the control vector field with respect the state and input variables, about a given trajectory.

When dealing with the dynamical system  
\begin{equation}
    \dot{x}(t) = f(x,u,t),
\end{equation}
whose state evolves on a vector space, namely, with state $x(t) \in \R^n$, input $u(t) \in \R^m$, time $t \in \R$, and 
control vector field
$f : \R^n \times \R^m \times \R \rightarrow \R^n$, that 
the sensitivity of $f$ with respect to $x$ and $u$ about the nominal trajectory $\eta(x(t),u(t))$ is straightforward to compute. Namely, it is given by
\begin{equation} \label{eq:introlinearsystem}
    \dot{z}(t) = A(\eta,t)z(t) + B(\eta,t)w(t) ,
\end{equation}
where $z(t) \in \R^n$ is the perturbation vector, $w(t) \in \R^m$ the perturbed input vector, $A(\eta,t) \in \R^{n \times n}$ the state matrix and $B(\eta,t) \in \R^{n \times m}$ the input matrix \cite[Section 3.3]{khalil2002nonlinear}.
For \emph{fixed-base} systems, such as industrial robot manipulators, whose configuration spaces are vector spaces, computationally efficient algorithms exist to compute the state and input matrices of \eqref{eq:introlinearsystem} \cite{carpentier2018analytical}. 
For \emph{moving-base} systems such as drones, humanoids, and quadrupeds (also known as floating-base systems \cite{featherstone2008rigid}), the configuration space is naturally written as the Cartesian product of the robot \emph{pose} --an element of the Lie group $SE(3)$-- and the robot \emph{shape}, described as $n$ dimensional manifold describing the robot's internal joint displacements. For 1-DOF joints, the most common type of actuated robot joints, the shape manifold can be further thought of as the Cartesian product of $n$ 1-DOF manifolds: one copy of $\mathbb{R}$ for each prismatic joint and one copy of $\mathbb{R}$ or the unit circle $\mathbb{S}^1$ for revolute joints \cite{LeeLeokmcClamroch2017GlobalFormulationLagrangianHamiltonianDynamics}, depending if one needs to treat 360-degrees rotation as being the same configuration or not.
In the robotics literature, revolute joints are commonly parameterized using $\mathbb{R}$ either because robot joints are physically constrained to less than a full rotation or, when a rotation of more than 360 degrees is possible, because encoders allow to count the actual number of rotations: such counting is essential as the robot configuration is actually not the same due to the presence of external cabling needed to collect and send signals from and to the end effector.
This robotic perspective is adopted explicitly in this paper, and therefore we will consider the specific case of moving-base robotic dynamics when the configuration manifold can be parameterized as the Lie group $\SE(3) \times \R^n$. 
In current robotics literature, by means of local parametrization of the orientation (using, e.g., Euler or Tait-Bryan angles), the configuration space of a moving-base system can be artificially considered as that of a fixed-base system, leading to computations performed in the configuration space $\mathbb{R}^{6+n}$, with 
$6$ being the dimension of $\SE(3)$. 
Therefore, algorithms for computing the sensitivity analysis for fixed-based systems can be applied to moving-based systems, although this gives rise to (parametrization) singularity issues \cite{ang1987singularities, diebel2006representing}. 
In this paper, we explore how to obtain the sensitivity of a moving-base system treating the configuration space for what it is, namely, the Lie group $\SE(3) \times \R^n$, without resorting to any local parametrization of the rotation part.

When a system evolves on a Lie group $G$, computing the sensitivity needs a different approach \cite{saccon2013optimal, sonneville2014sensitivity}. 
In \cite{saccon2013optimal}, the authors have provided a definition of sensitivity analysis for systems evolving on a generic Lie group and demonstrated it numerically on the rotational dynamics of a rigid body on $T\SO(3)$. In this approach, that can be described compactly as the linearized  dynamics  written  in  terms of exponential coordinates centered about the nominal trajectory described in a global fashion, the linearized system $\dot z = A z + B w$ evolves on the Lie algebra of the Lie group. 
This theory can applied to full rigid body dynamics (both translational and rotational) defined on $T\SE(3)$ \cite[Chapter 4]{murray2017mathematical}, and even to moving-base dynamics defined on $T(\SE(3) \times \R^n)$ such as those
described in \cite{from2010singularity, pucci2018momentum, ayusawa2008identification}, making it possible to define 
a notion of singularity-free sensitivity for moving-base rigid robot dynamics.
The idea of representing the linearized equations of a dynamic system on a Lie group making use of the tangent space (Lie Algebra) has been presented, limited to equilibrium points, also in \cite[Appendix B]{LeeLeokmcClamroch2017GlobalFormulationLagrangianHamiltonianDynamics}.

Moving-base robots such as humanoids are typically controlled at a discrete samplying time that varies in the range from hundreds to a few KHz and the amount of computations 
for inverse-kinematics-based control
or optimal whole-body control can increase at least quadratically with number of degrees of freedom. Adding to these reactive control 
strategy also predictive or offline planning strategies such as model-predictive control or trajectory optimization, it is
evident that the computational time needed to obtain the sensitivity is a limiting factor and more efficient computational methods are therefore always of interest and of practical value. In the following sections, we provide
an overview of the existing approaches and
highlight our contribution within this context. 

\subsection{Literature overview}

This section provides an overview of existing methods to compute the linearization of the equation of motions of a multibody model of a robotic system.
Four known methods to compute or approximate the sensitivity of multibody systems are discussed in the following subsections: 
\emph{finite differences} \cite{tassa2012synthesis}, 
\emph{Lagrangian derivation} \cite{garofalo2013closed}, 
\emph{automatic differentiation} \cite{giftthaler2017automatic}, and 
\emph{recursive analytical derivation} \cite{carpentier2018analytical}. 
After that, the literature 
about 
\emph{recursive multibody dynamics algorithms} \cite{featherstone2008rigid, luh1980line, walker1982efficient} is reviewed, as these
algorithms are the foundation of 
computationally efficient methods for 
multibody dynamics. 
Finally, sensitivity analysis on Lie groups \cite{sonneville2014sensitivity} and \emph{geometric linearization on Lie groups} \cite{saccon2013optimal, saccon2011lie}, \cite[Section II.G]{sola2018micro} is 
discussed, posing the basis 
for the understanding of this contribution. 

\paragraphbold{Sensitivity analysis using finite differences.} Finite differences is a relatively simple method to approximate the linearization. It evaluates the dynamics several times: once unperturbed, and multiple times with an added perturbation for each degree of freedom and for each input variable. For systems with a large number of degrees of freedom (e.g. humanoids or quadrupeds), this method becomes however time-consuming. Furthermore, the finite difference method is prone to numerical rounding errors. Despite the time-consuming computations, there is literature available showing successful usage of finite differences on real-time applications. In \cite{mason2014full}, a method to compute the sensitivity of moving-base systems is presented. The moving-base is modelled in a singularity-free way, although little details are provided regarding how this is done (sensitivity computations appear to be done at 1 Hz). 
In \cite{tassa2012synthesis}, a method is shown to apply model predictive control to humanoids, using the sensitivity computed by finite differences. The authors of \cite{tassa2012synthesis} claim that almost all CPU time is spent computing the sensitivity. Implementation of this strategy required careful implementation and parallel processing. 

\paragraphbold{Sensitivity analysis using Lagrangian derivation.} 
In \cite{garofalo2013closed}, by taking a Lagrangian perspective,
explicit analytical expressions are provided for the
dynamic matrices and their derivatives. The paper 
provides an interesting historical 
perspective for the need of computing
the sensitivity, however the provided expressions do not use the inherent sparsity of the dynamic matrices, resulting in unnecessary computations. The authors 
of \cite{garofalo2013closed} mention moving-base systems in the introduction, but do not model them in a singularity-free way in the derivation. 

\paragraphbold{Sensitivity analysis using automatic differentiation.} Automatic differentiation (AD) also known as algorithmic differentiation is a software
tool capable of generating 
a new computer program to numerically evaluate the derivative of a function specified by a given computer program. 
At its core,
AD applies the chain rule to all operations and function calls performed by the computer program. More specifically, AD relies on the fact that the derivatives of basic operations (i.e. addition, subtraction, multiplication) and trigonometric functions (e.g., $\sin$, $\cos$, $\exp$) are known to create a new 
binary expression tree that computes 
the derivative starting from the original 
binary expression tree.
In \cite{giftthaler2017automatic, neunert2016fast}, this method is applied to multibody systems. A tool called \textit{RobCoGen} is used to automatically generate robot-specific rigid body dynamics code. Automatic differentiation is applied to this code, to compute the the derivatives. Moving-base systems are mentioned, but the followed approach is to use Euler-like angles to prescribe the orientation and therefore introducing parametric singularity and extra computations deriving from the trigonometric parametrization of the orientation.
In \cite{anderson2002analytical} the author warns that automatic differentiation may give wrong results if it is based on pure syntactical analysis and is implemented without knowledge of the problem structure. 
Generally speaking, AD does not have a notion of geometric differentiation (e.g., the direct derivative of a rotation-matrix-valued function) and therefore it is unclear at this point how a singularity-free geometric derivative could be obtained automatically.

\paragraphbold{Sensitivity analysis using recursive analytical derivation.} Recursive analytical derivation uses recursive algorithms (in particular, the Recursive Newton Euler Algorithm (RNEA) and the Articulated Body Algorithm (ABA)) to obtain the linearization, by line-by-line differentiating these algorithms using the chain rule. The reader not familiar with the algorithms RNEA and ABA  --well-known and established in the robotic community-- is referred to next subsection `Recursive algorithms' for detailed references. Recursive analytical derivation requires algebraic differentiation of spatial algebra. In \cite{sohl2001recursive}, an algorithm which focuses on underactuated systems is presented. This algorithm is a hybrid algorithm: it computes partly forward dynamics for passive (also known as not actuated) joints and partly inverse dynamics for active (actuated) joints. The configuration space is assumed to be a vector space, therefore the moving-base cannot be represented in a singularity-free manner, plus there is the issue of uniqueness of the local parametrization that can avoided only by taking a differential geometry perspective (cf., e.g., \cite{LeeLeokmcClamroch2017GlobalFormulationLagrangianHamiltonianDynamics}). In \cite[Chapter 6]{park2018geometric}, an analytical algorithm computing the sensitivity of the inverse dynamics is presented. Moving-bases are not discussed. The authors of \cite{carpentier2018analytical} present an analytical algorithm that computes the sensitivity of the inverse dynamics, based on the RNEA. The derivatives of the forward dynamics can be computed by deriving the ABA, although the authors state that it may also be found by using a relation between the inverse and forward dynamics, which result in lower computation times. The computation times needed to derive the sensitivity of both the inverse and forward dynamics are found to be much lower compared to the finite difference method. In \cite{carpentier2018analyticalinverse}, the same authors present an algorithm to directly compute the inverse of the mass matrix, without first computing the mass matrix itself. As it is the inverse that is needed, and not the mass matrix itself, this also leads to lower computation times. This method is also found to have lower computational times than the standard approach of using Cholesky decomposition. 

\paragraphbold{Recursive algorithms.} Recursive algorithms are highly efficient methods to compute robot dynamics. The equations of motion of robotic systems
written in matrix form often present sparse matrices, where the sparsity is induced by the tree structure
of robot kinematics. Recursive algorithms exploit the tree-structure-induced matrix sparsity by omitting unnecessary computations. Three well-known algorithms for robotic systems within this context are the Recursive Newton Euler Algorithm (RNEA) \cite{luh1980line}, the Composite Rigid Body Algorithm (CRBA) \cite{walker1982efficient} and the Articulated Body Algorithm (ABA) \cite[Chapter 7]{featherstone2008rigid}. The RNEA computes the inverse dynamics, i.e., the necessary joint torques to achieve desired joint accelerations. The CRBA computes the mass matrix. The ABA computes the forward dynamics, i.e., the resulting joint accelerations with given joint torque. In \cite{featherstone2008rigid}, an overview of these three algorithms is presented, as well as how to use these algorithms obtain 
the equations of motion for multibody systems. In  \cite[Chapter 9]{featherstone2008rigid}, 
floating-base versions of the recursive algorithms
are presented. These modification are fully geometric, i.e., they make no use of Euler-like parameterization for the orientation as well as the angular velocities and accelerations. We will refer to them in the following as the \emph{Recursive Newton Euler Algorithm for moving-base systems} (RNEAmb), the \emph{Composite Rigid Body Algorithm for moving-base systems} (CRBAmb), and the \emph{Articulated Body Algorithm for moving-base systems} (ABAmb). 

\paragraphbold{Sensitivity on Lie groups and
the geometric linearization.} The theory of geometric linearization for a continuous-time dynamical system evolving on an arbitrary matrix Lie group  has been developed within
the context of geometric numerical optimal 
control in \cite{Saccon2010ProntoLieCDC}
and further refined and illustrated in 
\cite{saccon2013optimal, saccon2011lie}.
It allows to compute the sensitivity for control systems on Lie groups, by exploiting the fact that a Lie group's Lie algebra is a vector space with a special binary operation. In \cite{saccon2013optimal}, the authors apply this method to linearize rotational dynamics 
on $T \SO(3)$, demonstrated in the context of numerical optimal control.  The resulting linearization does not suffer from  parameterization-induced singularities
and the obtained expressions can be straightforwardly validated against a brute-force (geometric) finite difference approach.

In the context of multibody systems, a strictly related and independently developed approach for computing the continuous-time geometric sensitivity is presented in \cite{sonneville2014sensitivity},
stemming from \cite{Bruls2008linearization}. The approach in \cite{sonneville2014sensitivity} does not address the exploitation of sparsity by means of a recursive formulation to speed up the computations. 
The same authors
are however well aware that the exploitation of sparsity/recursion would be computationally advantageous, as validated in a specific example in
\cite{Docquier2019multibody}.

In \cite[Section II.G]{sola2018micro}, a compact introduction to Lie groups theory and its use in robotics for state estimation is given, which too shows the same concept of continuous-time geometric linearization (apparently being unaware of previously published results in this direction). A software implementation of these concepts is presented in \cite{Deray12020Manif}.

Although in this work we focus on the (recursive) computation continuous-time linearization of a mechanical system evolving on a Lie group, for sake of completeness, in the following we provide an overview of geometric linearization concepts that has been developed for discrete-time systems on Lie groups.
In the context of discrete-time variational integrators and discrete optimal control on Lie groups, discrete-time linearized equations of motion for the specific case of $SE(3)$ are presented in \cite{leok2007overview} and \cite{lee2008computational}. In the context of discrete-time Bayesian Filtering on Lie groups, a linearization of Lie group variational integrator have been developed in \cite{Sanyal2008} in order to propagate the estimates and the uncertainty ellipsoids of an Extended Kalman Filter. A related work about filtering on Lie groups, that develops a linearization of the invariant error around a fixed point, is presented in \cite{barrau2014intrinsic} and used
to construct a Kalman filter for the linearized model. Finally, in \cite{murphey2015}, a discrete-time linearization for discrete-time dynamics on Lie groups is presented. While it is expected that the various concepts of discrete-time linearization developed in these work are consistent with the continuous-time version employed here as time step goes to zero, such an analysis is beyond the scope of the present work.

\subsection{Contribution}

The main contribution of the present manuscript is the development of a recursive algorithm to obtain the singularity-free geometric linearization of moving-base multibody robotic systems. 
In the development of this algorithm, the following four requirements are explicitly addressed:
\begin{itemize}
  \item[~\textbf{(MB)}] Computation of the sensitivity for rigid \textbf{m}oving-\textbf{b}ase multibody dynamics
  \item[~\textbf{(SF)}] \textbf{S}ingularity-\textbf{f}ree representation of (the orientation of) the moving base
  \item[~\textbf{(RF)}] \textbf{R}ecursive \textbf{f}ormulation, exploiting
  the tree-structure kinematics
  \item[~\textbf{(ED)}] \textbf{E}xact \textbf{d}erivatives (no approximations as, e.g., finite differences)
\end{itemize}
To the best of our knowledge, there is no existing approach which satisfies all these four requirements.
The expectation is that such a recursive formulation, as well known in non-geometric context \cite{carpentier2018analytical, featherstone2008rigid}, will lead to a substantial computational speed up. This detailed computation efficiency analysis is however deemed as future work, the main focus of this work being the detailing and verification of the correctness of the derived algorithm.
A visual comparison, highlighting the similarity/difference with the existing approaches, is provided by Table \ref{tab:research}.

\begin{table}[h]
  \centering
  \caption{Overview of existing approaches and 
  their compliance with the four requirements 
  (MB, SF, RF, ED).}
  \label{tab:overview}
  \begin{tabular}{ | l | c | c | c | c | }
    \hline
    \textbf{Approach} 
      & \textbf{SF} 
      & \textbf{MB} 
      & \textbf{RF}
      & \textbf{ED}
      \\ \hline
    Geometric linearization \cite{Sanyal2008, barrau2014intrinsic, saccon2013optimal, saccon2011lie, murphey2015, leok2007overview, lee2008computational, sola2018micro}
      & \checkmark 
      & 
      & 
      & \checkmark
    \\ \hline  
    Sensitivity
    for multibody systems 
    on Lie groups
    \cite{sonneville2014sensitivity, Docquier2019multibody}
      & \checkmark 
      & \checkmark 
      & 
      & \checkmark
      \\ \hline
    Recursive algorithms \cite{featherstone2008rigid, luh1980line, walker1982efficient} 
      & \checkmark
      &
      & \checkmark 
      & \checkmark
      \\ \hline
    Finite differences \cite{tassa2012synthesis, mason2014full}
      & \checkmark
      & \checkmark
      & 
      & 
      \\ \hline
    Lagrangian derivation \cite{garofalo2013closed}
      & 
      & \checkmark
      & 
      & \checkmark
      \\ \hline
    Automatic differentiation \cite{giftthaler2017automatic, neunert2016fast}
      &
      & \checkmark
      & \checkmark
      & \checkmark
      \\ \hline
    Analytical derivation  \cite{carpentier2018analytical, sohl2001recursive, park2018geometric} 
      & 
      & \checkmark 
      & \checkmark 
      & \checkmark
      \\ \hline
    This manuscript  
      & \checkmark 
      & \checkmark 
      & \checkmark 
      & \checkmark
      \\ \hline
    \end{tabular}
  \label{tab:research}
\end{table}

Without loss of generality, the derivation
presented in this document will make use of the following assumption:
\begin{assumption} \label{assumption:joints}
We assume that all joints but the moving base are \textit{conventional} joints, i.e., joints of the following types: revolute, prismatic, helical, cylindrical or planar.
\end{assumption}

Furthermore, we limit our investigation to the case of unconstrained systems:
\begin{assumption} \label{assumption:externalforces}
We assume that the system is unconstrained (i.e.,
that are no closed kinematics loops/contacts and resulting contact/constraint forces acting on the system).
\end{assumption}
More specifically, a thorough discussion about the  geometric constrained linearization is left 
for future work, knowing that the geometric unconstrained linearization can be efficiently exploited for computing the constrained linearization as done in commercial multibody software (see, e.g.,  \cite{Negrut2006}).

The present contribution has been achieved by setting and reaching the following three objectives:
\begin{enumerate}
  \item Derive and present the mathematical formulas for the singularity-free geometric linearization of moving-base multibody systems. 
  \item Derive numerically efficient and accurate algorithms to compute the singularity-free geometric linearization of moving-base multibody systems.
  \item Verify the correctness of the derived algorithms.
\end{enumerate}
This manuscript details these 
accomplishments, together with providing necessary mathematical background, as detailed in the following section.

\subsection{Outline} 

In addition to this introduction,
the present manuscript 
is structured as follows. 
Section \ref{chp:preliminaries} introduces notation, multibody system definitions, the equations of motion for moving-base systems, and 
provides the basis of
(left-trivialized) geometric linearization.
Section \ref{chp:theoreticalaspects} presents the explicit expressions for the geometric linearization for moving-base multibody systems. 
Section \ref{chp:algorithmicaspects} shows how the
geometric linearization
can be computed 
efficiently by means of newly derived numerical recursive  algorithms. Numerical validation of the computationally efficient 
algorithm for geometric linearization of  
moving-based multibody systems
is presented in Section \ref{chp:numericalverification}.
Conclusion and future research directions are presented in Section \ref{chp:conclusions}.

\section{Preliminaries}
\label{chp:preliminaries}

In this section we introduce the notation, definitions, 
and equations of motion for moving-base systems
employed in what follows, and we recall the theory of left-trivialized 
geometric linearization.
We assume the reader to
be have some familiarity with the basic concepts of differential geometry \cite{lee2013smooth} and,
more in particular, 
matrix Lie groups \cite{varadarajan2013lie, rossmann2002lie}.

\subsection{Lie groups and differential geometry related notation} 
\hspace*{\fill} 

\renewcommand{\arraystretch}{1.0}
~\\
\begin{tabular}{ll}
$M$, $N$ & Smooth manifolds \\
$x$ & Point on a manifold \\
$T_x M$, $T_x^* M$ & Tangent and cotangent spaces of $M$ at $x$ \\
$TM$, $T^*M$ & Tangent and cotangent bundles of $M$ at $x$ \\
$f : M \rightarrow N$ & (Smooth) mapping for $M$ to $N$ \\
$\D f : TM \rightarrow TN$ & Tangent map of $f$ \\
$\mathrm{D}_i f : TM \rightarrow TN$ & Tangent map of $f$ with respect to the $i$-th argument of \\
    & $f: \dots \times M \times \dots  \rightarrow N$ \\
$G$ & Lie group \\
$g \in G$ & Element of Lie group \\
$e$ & Group identity \\
$\cdot_G$ & Operation associated to the Lie group $G$ \\
$\mathfrak{g}$ & Lie algebra of $G$ \\
$[\cdot , \cdot]_\mathfrak{g}$ & Lie brackets on $\mathfrak{g}$ \\
$L_g x$, $R_g x$ & Left and right translations of $x \in G$ by $g \in G$ \\
$gx$ , $xg$ & Shorthand notation for $L_g x$, $R_g x$ \\
$gv$ , $vg$ & Shorthand notation for $\D L_g(x)\cdot v$, $\D R_g(x)\cdot v$ 
              with $v \in T_x G$ \\
$\Ad$ & Adjoint representation of a Lie group to its algebra \\
$\ad$ & Adjoint representation of a Lie algebra onto itself \\
$S_1 \times S_2$ & Cartesian product of sets $S_1$ and $S_2$ \\
$G_1 \times G_2$ & Direct product of the Lie groups $G_1$ and $G_2$ \\
$\mathfrak{g}_1 \oplus \mathfrak{g}_2$ & Direct sum of the Lie 
              algebras $\mathfrak{g}_1$ and $\mathfrak{g}_2$ \\
$\exp : \mathfrak{g} \rightarrow G$ & Exponential map of $G$ \\
$\log : G \rightarrow \mathfrak{g}$ & Logarithm map (inverse 
    of exp in a neighbourhood of $e$) \\
$\SO(3)$ & Special Orthogonal group of dimension 3 \\
$\SE(3)$ & Special Euclidean group of dimension 3 \\
$\so(3)$ & Lie algebra of $\SO(3)$ \\
$\se(3)$ & Lie algebra of $\SE(3)$ \\
$\R^3_\times$ & Lie algebra given by $\R^3$ with the cross product as 
                Lie bracket \\ 
$\R^6_\times$ & Lie algebra given by $\R^6$ with the 6D cross product 
                as Lie bracket \\ 
$\wedge \text{~(read:~wedge)}$ & Lie algebra isomorphism from $\R^6_\times$ to $\se(3)$ 
                   (or from $\R^3_\times$ to $\so(3)$) \\
$\vee \text{~(read:~vee)}$ & Lie algebra isomorphism from $\se(3)$ to $\R^6_\times$ (or 
    from $\so(3)$ to $\R^3_\times$) \\
$I_n \in \R^{n \times n}$ & Identity matrix of dimension $n$ \\
$\partial x / \partial y$ & Partial derivative of $x$ with respect to 
    scalar or vector $y$ \\    
$\tilde{\partial} x / \partial y$ & Left-trivialized partial derivative of $x$ with respect to Lie \\
    & group element $y$ (defined in Section \ref{sec:diffEIDA}) \\
$[a;b]$ & row concatenation operator ($[a;b] := [a^T,b^T]^T$) \\
\end{tabular}

\subsection{Definitions for moving-base systems}
\label{subsec:definitionsformovingbasesystems}
In this subsection, all definitions regarding bodies and joints are presented. These are used, in particular, in the multibody algorithms presented in Section \ref{chp:algorithmicaspects}. 
First of all, we define a multibody system as a group of rigid bodies connected by joints, so that the bodies may undergo relative translational and rotational displacements.

It is possible to model multiple-degrees-of-freedom 
joints (namely, cylindrical and planar) as multiple 1-DoF joints (namely, revolute, prismatic, helical), which allows simplification of the recursive algorithms
(in this manuscript, we just present the case of 1-DoF joints). The cylindrical joint can be modelled as a combination of a revolute and a prismatic joint about the same axis. The planar joint can be modelled as a combination of two prismatic joints in the same plane and a rotational joint orthogonal to it. 
We warn the reader that rewriting joints as a cascade of 1-DoF joints is not fully possible for a spherical joint (also known as ball and socket joint). A spherical joint's configuration is described by a rotational matrix in $\SO(3)$. They
should not be confused with ideal 2-DoF joints evolving on the unit sphere $\mathbb{S}^2$ and whose geometric dynamics is described, e.g., in the excellent monograph \cite[Chapter 5]{LeeLeokmcClamroch2017GlobalFormulationLagrangianHamiltonianDynamics}.
For sake of ease of presentation, we have decided to leave out a detailed discussion about how to handle spherical joints both from a modeling and geometric linearization perspective. 
It should be noted, however, that because a spherical joint's displacement can be represented as a rotational matrix, the developed machinery of geometric linearization, detailed in this manuscript, 
would allow to handle them directly in a geometrically
consistent and computationally efficient manner.

Body numbering starts at the moving-base, which is defined as body $0$, and going outwards the body numbers increase up to $n_B$, the \emph{total number of bodies excluding the moving base}, implying that each body's number must be higher than the one of its parent (there are clearly multiple possibilities to number tree-based multibody systems). 
The parent of body $i$ will be indicated with $\lambda(i)$ and its children with $\mu(i)$, that is a set of indices (array-structure). An example of body numbering is depicted in Figure \ref{fig:body_numbering}, where $\mu(i) = \{j,k\}$. 

\begin{figure}[h]
  \centering
  \includegraphics[scale=0.8]{./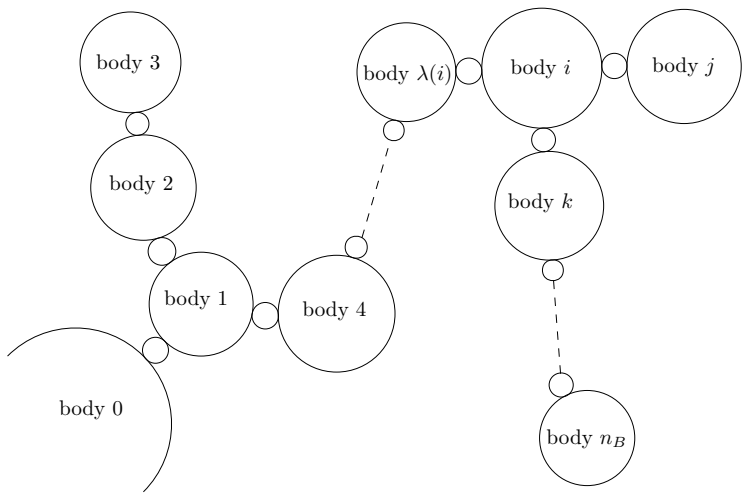}
  \caption{An example of body numbering for moving-base multibody systems with tree-topology kinematics. The moving base is denoted $0$. In the image, $j > i$ and $k > i$. The parent body of body $i$ is denoted $\lambda(i)$.}
  \label{fig:body_numbering}
\end{figure}

By convention, we define joint $i$ to be the joint connecting body $i$ (successor) to body $\lambda(i)$ (predecessor), as shown in Figure \ref{fig:joint_numbering}. Note that the number of joints $n_J$ is equal to the number of bodies, $n_B$.

\begin{figure}[ht]
  \centering
  \includegraphics[scale=0.8]{./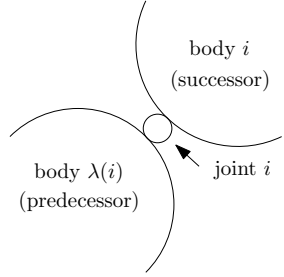}
  \caption{Joint numbering convention. Joints are numbered according to the successor body.}
  \label{fig:joint_numbering}
\end{figure}

Various coordinate frames are defined to 
express the position of rigid bodies in space. In particular, frame $A$ represents the \emph{inertial frame}. 
Frame $i$ represents the frame attached of body $i$ at the location where it connects to its predecessor joint $\lambda(i)$. 
Frame $\lambda(i)|i$ represents the frame attached to body $\lambda(i)$ at the location where it connects to joint $i$. 
The transformation matrix from frame $\lambda(i)$ to frame $\lambda(i)|i$ is denoted  $\ls^{\lambda(i)|i}\vH_{\lambda(i)} \in \SE(3)$ and 
it is a constant parameter of body $\lambda(i)$ because both frames are fixed relative to body $\lambda(i)$. 
The transformation matrix from frame $\lambda(i)|i$ to frame $i$ is denoted $\ls^i\vH_{\lambda(i)|i}$ (it is a parameterized rotation about the joint axis if, e.g.,
the joint is revolute). An example is shown in Figure \ref{fig:framesi}. 

\begin{figure}[h]
  \centering
    \includegraphics[scale=0.8]{./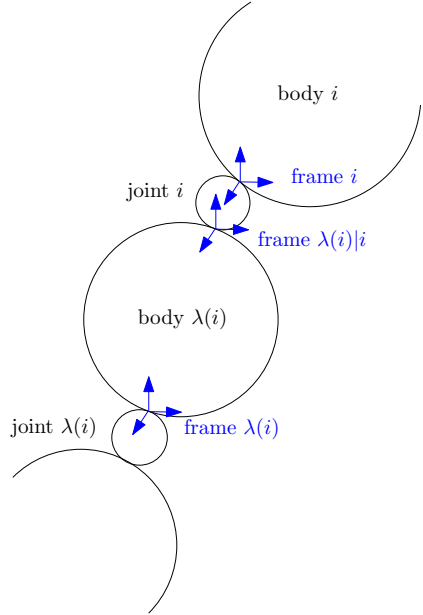}
    \caption{A general example of body and joint frames.}
    \label{fig:framesi}
\end{figure}

To represent quantity used in recursive multibody dynamics algorithms, in this manuscript use Eindhoven-Genoa (EG) notation~\cite{traversaro2019multibodyv2}. This notation is inspired by the one used in multibody dynamics algorithms literature~\cite{featherstone2008rigid}, but clearly defined to be compact, non-ambiguous, and in harmony with Lie Group formalism. A short introduction of this notation is given in Table \ref{tab:EGnotation}. For more details about the notation, see \cite{traversaro2019multibodyv2}.

\begingroup
\renewcommand{\arraystretch}{1.2}
\begin{table}[H]
\caption{Introduction of Eindhoven-Genoa (EG) notation.} 
\label{tab:EGnotation}
\centering
\begin{tabular}{|p{1.3cm}|p{1.8cm}|p{8.5cm}|}
\hline
\textbf{EG} & \textbf{Dimension} & \textbf{Explanation} \\ \hline
$\ls^A\vH_B$
   & $\SE(3)$
   & Transformation matrix of frame $B$ w.r.t. frame $A$
   \\ \hline
$\ls^A\vR_B$
   & $\SO(3)$
   & Rotation matrix of frame $B$ w.r.t. frame $A$
   \\ \hline
$\ls^A\vo_B$
   & $\R^3$
   & Origin of frame $B$ w.r.t. frame $A$
   \\ \hline
$\ls^C\vv_{A,B}$
   & $\R^6$
   & Twist of frame $B$ w.r.t. frame $A$ expressed in frame $C$
   \\ \hline
$\ls^A\va_{A,B}$
   & $\R^6$
   & Intrinsic \cite[Section 5.1]{traversaro2019multibodyv2} acceleration of frame $B$ w.r.t. frame $A$ expressed in frame $C$
   \\ \hline
$\ls_A\vf$
   & $\R^6$
   & Wrench w.r.t. frame $A$ (often written as $\vb$ for bias wrench)
   \\ \hline
$\ls^A\vX_B$ 
   & $\R^{6 \times 6}$
   & Velocity transformation of frame $B$ w.r.t. frame $A$
   \\ \hline
$\ls_A\vX^B$ 
   & $\R^{6 \times 6}$
   & Wrench transformation of frame $B$ w.r.t. frame $A$
   \\ \hline
$\vs$ 
   & $\R^{n_J}$
   & Generalized position vector or system shape
   \\ \hline
$\vr$ 
   & $\R^{n_J}$
   & Generalized velocity vector
   \\ \hline
$\vtau$ 
   & $\R^{n_J}$
   & Joint torques or generalized forces vector
  \\ \hline
$\ls^C\vv_{A,B} \times$
   & $\R^{6 \times 6}$
   & 6D twist cross product on $\R^6$ (defined in Section \ref{sec:Liegroupofmovingbasesystems}) 
   \\ \hline
$\ls^C\vv_{A,B} \bar{\times}^*$
   & $\R^{6 \times 6}$
   & 6D twist/wrench cross product on $\R^6$ 
   \\   
   \hline
\end{tabular}
\end{table}
\endgroup

\subsection{Equations of motion for moving-base systems}
\label{subsec:equationsofmotion}
We write the configuration of a moving-base system with $n_J$ 1-DoF joints as $\vq := (\vH,\vs)$, where $\vH := \ls^A\vH_0 \in \SE(3)$ is the moving-base transformation matrix and $\vs \in \R^{n_J}$ the joint displacements vector. The total number of degrees of freedom is denoted by $n = n_J + 6$. 
The time-derivative of the configuration is given by $\dot{\vq} = (\dot{\vH},\dot{\vs}) \in T\SE(3)\times T\R^{n_J}$, which we identify with $T\SE(3)\times\R^{n_J}$. Through left-trivialization,
the system velocity 
$\dot \vq$ will be written equivalently as $\vnu := (\vv,\vr)$, where $\vv := \ls^0\vv_{A,0} \in \R^6$ is the moving-base twist and $\vr := \dot{\vs} \in \R^{n_J}$ is the generalized velocity vector. More
explicitly, the kinematics of the moving base and of the joints are given by
\begin{align}
    \dot{\vH} &= \vH\vv^\wedge , \label{eq:kinbase} \\
    \dot{\vs} &= \vr . \label{eq:kinjoints}
\end{align}

The unconstrained forced dynamics of a moving-base system in $\SE(3) \times \R^{n_J}$ is 
\begin{equation} \label{eq:dynamicsgeneral}
	\vM \dot{\vnu} + \vC \vnu + \vG = \vS \vtau ,
\end{equation}
where $\vM \in \R^{n \times n}$ is the mass matrix, $\vC \in \R^{n \times n}$ the Coriolis matrix, $\vG \in \R^n$ the gravitational wrench vector, $\vS := [0_{6 \times n_J} ; I_{n_J}] \in \R^{n \times n_J}$ the joint selection matrix, and $\vtau \in \R^{n_J}$ the joint torques. 
Employing the generalized bias vector 
$\vh := \vC \vnu + \vG \in \R^n$, \eqref{eq:dynamicsgeneral} 
can be rewritten as
\begin{equation}
    \vM \dot{\vnu} + \vh = \vS \vtau . \label{eq:dynamicsbias}
\end{equation}
In matrix form, \eqref{eq:dynamicsbias} reads
\begin{equation}
    \begin{bmatrix} 
        \vM_{11} & \vM_{12} \\
        \vM_{21} & \vM_{22}
    \end{bmatrix}
    \begin{bmatrix}
        \dot{\vv} \\
        \dot{\vr}
    \end{bmatrix} 
    +
    \begin{bmatrix}
        \vh_1 \\
        \vh_2
    \end{bmatrix}
    =
    \begin{bmatrix} 
        0_{6 \times 1} \\
        \vtau
    \end{bmatrix} ,
\end{equation}
where $\vM_{11} \in \R^{6 \times 6}$ is 
sometimes called the locked inertia 
\cite{saccon2017centroidal} corresponding
to the 6D inertia matrix of the whole system when considering its joints as locked (composite rigid body), $\vM_{12} = \vM_{21}^\mathrm{T} \in \R^{6 \times n_J}$ is a matrix stacking the wrenches required to support unit acceleration \cite[Section 9.3]{featherstone2008rigid}, $\vM_{22} \in \R^{n_J \times n_J}$ the generalized inertia matrix
corresponding to assuming the moving base as fixed, $\vh_1 \in \R^6$ the bias wrench for the whole system as a composite rigid body and $\vh_2 \in \R^{n_J}$ the generalized bias wrench vector of the joints.

\subsection{Left-trivialized geometric linearization on Lie groups} 
In this section, we recall the definition of left-trivialized geometric linearization as detailed in \cite{saccon2013optimal, saccon2011lie}, which is the cornerstone of our derivation.

Given a Lie group $G$, a controlled dynamical system evolving on $G$ is defined as
\begin{equation} \label{eq:dynsysgeneral}
    \dot{g}(t) = f(g,u,t),
\end{equation}
where $g \in G$ is the system state, $u \in \R^m$ the system input and $t \in \R$ is time. 
The left-trivialized vector field associated to
$f$ is the map $\lambda : G \times \R^m \times \R \rightarrow \mathfrak{g}$, 
\begin{equation} \label{eq:lefttrivgeneral}
    \lambda(g,u,t) := g^{-1}(t) f(g,u,t),
\end{equation}
that allows to rewrite \eqref{eq:dynsysgeneral} as
\begin{equation}
    \dot{g}(t) = g(t) \lambda(g,u,t).
\end{equation}
Given a nominal trajectory $\eta(t) := (g(t),u(t)) \in G \times \R^m$, $t \geq 0$, the left-trivialized geometric linearization of the dynamical system \eqref{eq:dynsysgeneral} about $\eta(t)$ is the time-varying linear system 
\begin{equation} \label{eq:lefttrivlin}
    \dot{z}(t) = A(\eta,t) z(t) + B(\eta,t) w(t),
\end{equation}
with $z(t) \in \mathfrak{g}$ the perturbation vector, $w(t) \in \R^m$ the perturbed input vector,  
\begin{align}
    \label{eq:matrixAgeneral}
    A(\eta,t) &:= \D_1 \lambda(g,u,t) \circ \D L_g (e) - \ad_{\lambda(g,u,t)} 
    \shortintertext{and}
    \label{eq:matrixBgeneral}
    B(\eta,t) &:= \D_2 \lambda(g,u,t),
\end{align}
called respectively the state and input linearization matrices.
In the following section, we will show how to interpret
\eqref{eq:dynamicsgeneral} as a controlled dynamical system 
on a suitable Lie group and then compute 
explicitly its left-trivialized
linearization.

\section{Left-Trivialized Linearization for Multibody Dynamics}
\label{chp:theoreticalaspects}

In this section, we provide the unconstrained moving-base multibody dynamics with a Lie group structure and use this Lie group
to derive the state and input linearization matrices
associated to the multibody dynamics.
Furthermore, we show how to rewrite this linearization, which is naturally written in terms of the forward dynamics, by using inverse dynamics. This later step is a necessary bridge to the computationally efficient implementation of the geometric linearization making use of recursive algorithms that will be presented in Section~\ref{chp:algorithmicaspects}.

\subsection{Lie group of moving-base systems}
\label{sec:Liegroupofmovingbasesystems}

Under the assumption, typically encountered in practice, that revolute joints configuration
include the information about number of revolutions (cf. Section~\ref{sec:motivation}), the configuration manifold of a moving-base robot composed of a moving base to which $n_J$ 1-DoF joints are attached is given by
\begin{equation}
    Q = \SE(3) \times \R^{n_J} .
\end{equation}

A generic configuration $\vq$ is then written as $\vq = (\vH,\vs) \in Q$, as presented in Section~\ref{subsec:equationsofmotion}.
The state manifold of the moving-base system is
\begin{equation}
    TQ = T \big( \SE(3) \times \R^{n_J} \big) ,
\end{equation} 
which can be identified (by means of a diffeomorphism) with 
\begin{equation} \label{eq:statespace}
    TQ \cong \SE(3) \times \R^{n_J} \times \R^6 \times \R^{n_J}
\end{equation}
by applying left-trivialization.
An element of the state manifold $TQ$ can be therefore written as $(\vH, \vs, \vv, \vr) \in TQ$, where $\vv$ and $\vr$ represent the velocity of the moving base and that of the joints, respectively. 

The state manifold \eqref{eq:operation} is not a Lie group, as we have not yet defined an operation. Taking two elements
$(\vH_1, \vs_1, \vv_1, \vr_1)$ and $(\vH_2, \vs_2, \vv_2, \vr_2)$ of (the set) $\SE(3) \times \R^{n_J} \times \R^6 \times \R^{n_J}$, we define the group operation as 
\begin{equation} \label{eq:operation}
    (\vH_1, \vs_1, \vv_1, \vr_1) \cdot (\vH_2, \vs_2, \vv_2, \vr_2) = (\vH_1 \vH_2, \vs_1 + \vs_2, \vv_1 + \vv_2, \vr_1 + \vr_2) .
\end{equation}
It is straightforward to verify that \eqref{eq:statespace} equipped with \eqref{eq:operation} forms a Lie group. Note that there exist alternative group operations which could have be chosen to turn \eqref{eq:statespace}
into a Lie group (one being the so called \emph{tangent group}).  In \cite{saccon2013optimal}, it has been shown that (for $\SO(3)$) a Lie group operation defined as in \eqref{eq:operation} leads to simpler computational expressions for the geometric linearization and therefore such an hindsight is also exploited here. Therefore, we defined the Lie group $G$ of moving-base systems as\footnote{$\times$ denotes the group direct product.} $\SE(3) \times \R^{n_J} \times \R^6 \times \R^{n_J}$, by combining the set \eqref{eq:statespace} with the operation \eqref{eq:operation} .

The Lie algebra of $G$ is\footnote{$\oplus$ denotes the direct sum of two Lie algebras} $
    \mathfrak{g} 
    = 
	\se(3) \oplus \R^{n_J} \oplus \R^6 \oplus \R^{n_J}$, 
which we can identify with $
	\mathfrak{g} 
	\cong
	\R^6_\times \oplus \R^{n_J} \oplus \R^6 \oplus \R^{n_J}$. 
We recall that the sub-algebra $\R^6_\times$ is the vector space $\mathbb{R}^6$ with the Lie bracket given by the 6D cross product defined by
\begin{equation}
\label{eq:6dcrossproduct}
  \vv \times 
  := 
  \begin{bmatrix}
    \bmomega^\wedge & 
    \bmv^\wedge \\
    0_{3 \times 3} & 
    \bmomega^\wedge
  \end{bmatrix} 
  , 
  \quad \textrm{where} \quad 
  \vv 
  = 
  \begin{bmatrix}
    \bmv \\
    \bmomega 
  \end{bmatrix} ,
\end{equation}
with $\bmv \in \R^3$ and $\bmomega \in \R^3$ (representing the linear and angular velocity of the moving base, respectively).

\subsection{State and input linearization matrices for moving-base systems}
Recalling that $G = \SE(3) \times \R^{n_J} \times \R^6 \times \R^{n_J}$,
from \eqref{eq:dynamicsbias}, we can write the forward dynamics $FD : G \times \R^{n_J} \rightarrow \R^n$ relating configuration, velocity, and input to acceleration (i.e., the time derivative of the velocity $\vnu := (\vv,\vr)$) as
\begin{equation} \label{eq:forwarddynamics}
	FD(\vH,\vs,\vv,\vr,\vtau) 
	:=
	\dot{\vnu}(\vH,\vs,\vv,\vr,\vtau) 
	=
	\vM^{-1}(\vs) \big[ - \vh(\vH,\vs,\vv,\vr) + \vS \vtau \big] .
\end{equation}
The forward dynamics \eqref{eq:forwarddynamics} can be then partitioned as $FD = [FD_b ; FD_j]$ with
\begin{align} 
    FD_b (\vH,\vs,\vv,\vr,\vtau) 
    &:=
    \dot{\vv}(\vH,\vs,\vv,\vr,\vtau) 
    \label{eq:forwarddynamicsbase} ,
    \\
    FD_j (\vH,\vs,\vv,\vr,\vtau) 
    &:=
    \dot{\vr}(\vH,\vs,\vv,\vr,\vtau) 
    \label{eq:forwarddynamicsjoints} ,
\end{align}
with $FD_b : G \times \R^{n_J} \rightarrow \R^6$ the forward dynamics of the moving base and $FD_j : G \times \R^{n_J} \rightarrow \R^{n_J}$ the forward dynamics of the joints. From the forward dynamics, we obtain the associated left-trivialized linearization, as detailed in the following proposition.

\begin{proposition}
    \label{proposition:linearization}
    The left-trivialized linearization of the moving-base kinematics $\dot{\vH} = \vH \vv$ and $\dot{\vs} = \vr$ and the forward dynamics \eqref{eq:forwarddynamics} about a given trajectory $\eta(t) = \left(g(t),u(t)\right) \in G \times \R^{n_J}$, $t \in \R$, is given by $\dot{z}(t) =  A(\eta,t) z(t) + B(\eta,t) w(t)$, where $z = [z_H; z_s; z_\rmv; z_r] \in \mathfrak{g}$, is the perturbation vector, $w = \vtau \in \R^{n_J}$ 
    the  perturbed input vector, and where the state matrix $A$ and input matrix $B$ are given by\footnote{number of rows and columns of each submatrices of $A$ and $B$ are explicitly indicated in the expressions to indicate the size of each block.}
    \begin{align} \label{eq:matrixA}
        A(\eta,t) 
        &= 
        \begin{blockarray}{ccccc}
          6 & n_J & 6 & n_J & \vspace{0.2cm} \\
          \begin{block}{[cccc]c}
            - \vv \times & 0 & I & 0 & \hspace{0.3cm} 6 \\
                         0 & 0 & 0 & I & \hspace{0.3cm} n_J \\
            \D_1 FD_b \circ \D L_H (I) 
            & \D_2 FD_b 
            & \D_3 FD_b 
            & \D_4 FD_b
            & \hspace{0.3cm} 6 \\
            \D_1 FD_j \circ \D L_H (I) 
            & \D_2 FD_j 
            & \D_3 FD_j 
            & \D_4 FD_j 
            & \hspace{0.3cm} n_J \\
          \end{block}
        \end{blockarray} ,
    \shortintertext{and}
    \label{eq:matrixB}
        B(\eta,t) 
        &= 
        \begin{blockarray}{cc}
          n_J \vspace{0.2cm} & \\
          \begin{block}{[c]c}
            0         & \hspace{0.3cm} 6  \\
            0         & \hspace{0.3cm} n_J \\   
            \D_5 FD_b & \hspace{0.3cm} 6 \\
            \D_5 FD_j & \hspace{0.3cm} n_J \\
          \end{block} 
        \end{blockarray} ,
    \end{align}
    with the derivatives being evaluated at $FD_b( \vH, \vs, \vv, \vr, \vtau )$ and $FD_j( \vH, \vs, \vv, \vr, \vtau )$. 
    $\hfill \blacksquare$
\end{proposition}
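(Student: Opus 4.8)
The plan is to obtain \eqref{eq:matrixA}--\eqref{eq:matrixB} by specializing the general left-trivialized linearization formulas \eqref{eq:matrixAgeneral}--\eqref{eq:matrixBgeneral} to the Lie group $G = \SE(3) \times \R^{n_J} \times \R^6 \times \R^{n_J}$ equipped with the operation \eqref{eq:operation}. The first step is to read the combined base kinematics \eqref{eq:kinbase}, joint kinematics \eqref{eq:kinjoints}, and forward dynamics \eqref{eq:forwarddynamics} as a single controlled system $\dot{g} = f(g,\vtau,t)$ on $G$, with state $g = (\vH,\vs,\vv,\vr)$, input $\vtau$, and right-hand side $f = (\vH\vv^\wedge, \vr, FD_b, FD_j)$. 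This identification is what lets the abstract machinery apply verbatim.

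Next I would compute the left-trivialized vector field $\lambda = g^{-1} f$ of \eqref{eq:lefttrivgeneral}. Since the operation \eqref{eq:operation} is the direct product of the $\SE(3)$ multiplication with ordinary vector addition on the three remaining abelian factors, the identity is $(I,0,0,0)$, the inverse is $(\vH^{-1},-\vs,-\vv,-\vr)$, and the left-trivialization decouples component-wise. On the $\SE(3)$ factor it returns $\vH^{-1}\dot{\vH} = \vv^\wedge$, identified with $\vv \in \R^6_\times$; on each abelian factor it is the identity, so the remaining components are $\dot{\vs} = \vr$, $\dot{\vv} = FD_b$, and $\dot{\vr} = FD_j$. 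Hence $\lambda = (\vv, \vr, FD_b, FD_j)$.

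The third step is to assemble $A = \D_1\lambda\circ\D L_g(e) - \ad_\lambda$ and $B = \D_2\lambda$. For the first term I would use that the two leading components of $\lambda$ are the bare state components $\vv$ and $\vr$: the left-trivialized derivative of $\vv$ vanishes with respect to $\vH$, $\vs$, $\vr$ and equals $I$ with respect to $\vv$, producing the row $[0,0,I,0]$; likewise $\vr$ yields $[0,0,0,I]$. The last two rows are simply the left-trivialized derivatives of $FD_b$ and $FD_j$, the $\SE(3)$-slot derivative being written via $\circ\,\D L_H(I)$. For $\ad_\lambda$ I would invoke the direct-sum structure of $\mathfrak{g} = \R^6_\times \oplus \R^{n_J} \oplus \R^6 \oplus \R^{n_J}$: because only the first factor $\R^6_\times$ is nonabelian, the bracket is supported entirely in the top-left $6\times 6$ block, where it equals $\ad_{\vv}$, i.e. the $6$D cross-product matrix $\vv\times$ of \eqref{eq:6dcrossproduct}. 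Subtracting $\ad_\lambda$ therefore only turns the zero in the top-left block into $-\vv\times$, giving \eqref{eq:matrixA}; and $B = \D_2\lambda$, the $\vtau$-derivative, vanishes on the two kinematic components and leaves $\D_5 FD_b$, $\D_5 FD_j$, giving \eqref{eq:matrixB}.

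I expect the main subtlety to be the bookkeeping of the left-trivialization together with the $\ad$ term on this mixed group. In particular one must track that the $-\vv\times$ entry in the top-left block of $A$ comes \emph{solely} from $-\ad_\lambda$, the corresponding entry of $\D_1\lambda\circ\D L_g(e)$ being zero because $\lambda_H = \vv$ is independent of $\vH$, and that the abelian factors contribute nothing to the adjoint action. Confirming that the direct-product operation \eqref{eq:operation} yields exactly this block-diagonal bracket, rather than the coupling produced by the tangent-group operation, is the conceptual crux that makes the expressions collapse to the stated form.
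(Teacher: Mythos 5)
Your proposal is correct and follows essentially the same route as the paper's proof: identify $\lambda = g^{-1}f = (\vv,\vr,FD_b,FD_j)$, observe that $\ad_\lambda$ is supported only in the top-left $6\times 6$ block (equal to $\vv\times$) because the remaining factors of $\mathfrak{g}$ are abelian, and assemble $A = \D_1\lambda\circ\D L_g(e) - \ad_\lambda$ and $B = \D_2\lambda$ blockwise. Your explicit remark that the $-\vv\times$ entry comes solely from $-\ad_\lambda$ while the corresponding block of $\D_1\lambda\circ\D L_g(e)$ vanishes is exactly the bookkeeping the paper performs in equations \eqref{eq:adjointA} and \eqref{eq:lefttrivtangent}.
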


\noindent
\textit{Proof:} Using $g = (\vH,\vs,\vv,\vr)$ and $u = \vtau$, the left-trivialized vector field $\lambda (g,u,t)$ of \eqref{eq:lefttrivgeneral} for the dynamics of comprising \eqref{eq:kinbase}, \eqref{eq:kinjoints}, and \eqref{eq:forwarddynamics} is 
\begin{align}
	\lambda (g,u,t) 
	= (\vv, \vr, \dot{\vv}, \dot{\vr})
	\in \mathfrak{g}
	\label{eq:lambda} .
\end{align}
For the Lie group $G = \SE(3) \times \R^{n_J} \times \R^6 \times \R^{n_J}$, the state matrix $A$ in \eqref{eq:matrixAgeneral} and input matrix $B$ in \eqref{eq:matrixBgeneral} need to be compute as
\begin{align}
    A(\eta,t) 
    &:=
    \D_1 \lambda (g,u,t) \circ \D L_{( \vH, \vs, \vv, \vr)} (I, 0, 0, 0) - \ad_{\lambda (g,u,t)} 
    \shortintertext{and}
    B(\eta,t) 
    &:=
    \D_2 \lambda (g,u,t) ,
\end{align}
with $(I, 0, 0, 0)$ the identity of $G$.
The matrix form of the adjoint representation
of the Lie algebra onto itself
evaluated at $\lambda(g,u,t)$, namely $\mathrm{ad}_{\lambda(g,u,t)}$, is given by 
\begin{equation} \label{eq:adjointA}
    \ad_{\lambda (g,u,t)} 
    = 
    \begin{blockarray}{ccccc}
      6 & n_J & 6 & n_J & \vspace{0.2cm} \\
      \begin{block}{[cccc]c}
        \vv \times & 0 & 0 & 0 & \hspace{0.3cm} 6   \\
        0          & 0 & 0 & 0 & \hspace{0.3cm} n_J \\
        0          & 0 & 0 & 0 & \hspace{0.3cm} 6   \\
        0          & 0 & 0 & 0 & \hspace{0.3cm} n_J \\
      \end{block}
    \end{blockarray}.
\end{equation}
The left-translated tangent map $\D L_{( \vH, \vs, \vv, \vr)} (I, 0, 0, 0)$ evaluated in the direction $z = (z_H, z_s, z_\rmv, z_r)$ is instead given by
\begin{equation} \label{eq:lefttrivtangentmap}
    \D L_{( \vH, \vs, \vv, \vr)} (I, 0, 0, 0) \cdot z 
    =
    (\D L_H (I) \cdot z_H , z_s , z_\rmv , z_r) .
\end{equation}
The left-trivialized tangent map $ \D_1 \lambda (g,u,t) \circ \D L_{( \vH, \vs, \vv, \vr)} (I, 0, 0, 0)$ is found by combining the forward dynamics of the base and joints \eqref{eq:forwarddynamicsbase} and \eqref{eq:forwarddynamicsjoints}, the left-trivialized vector field $\lambda(g,u,t)$ \eqref{eq:lambda} and the left-translated tangent map \eqref{eq:lefttrivtangentmap}, resulting in
\begin{align}
     \D_1 \lambda (g,u,t) &\circ \D L_{( \vH, \vs, \vv, \vr)} (I, 0, 0, 0) 
     =
     \nonumber
     \\
     & \begin{blockarray}{ccccc}
      6 & n_J & 6 & n_J & \vspace{0.2cm}  \\
      \begin{block}{[cccc]c}
        0 & 0 & I & 0 & \hspace{0.3cm} 6 \\
        0 & 0 & 0 & I & \hspace{0.3cm} n_J \\
        \D_1 FD_b \circ \D L_H (I) 
        & \D_2 FD_b
        & \D_3 FD_b
        & \D_4 FD_b 
        & \hspace{0.3cm} 6 \\
        \D_1 FD_j \circ \D L_H (I) 
        & \D_2 FD_j
        & \D_3 FD_j
        & \D_4 FD_j
        & \hspace{0.3cm} n_J \\
      \end{block}
    \end{blockarray} .
    \label{eq:lefttrivtangent}
\end{align}
Note how, in \eqref{eq:lefttrivtangent}, 
the left-translated tangent maps of $FD_b$ and $FD_j$ with respect to $\vs$, $\vv$ and $\vr$ --entries $(3,2)$, $(3,3)$, $(3,4)$, $(4,2)$, $(4,3)$, and $(4,4)$-- are ordinary partial derivatives. 

Combining the definition of the state matrix \eqref{eq:matrixAgeneral}, the matrix form of the adjoint representation \eqref{eq:adjointA} and the left-trivialized tangent map \eqref{eq:lefttrivtangent}, we 
arrive at the expression for the state matrix $A(\eta,t)$ given in \eqref{eq:matrixA}.
The expression for the input matrix $B(\eta,t)$ given in \eqref{eq:matrixB}, derives from the identity
\begin{equation}
	\D_2 \lambda (g,u,t) = 			
	\begin{blockarray}{cc}
		n_J & \vspace{0.2cm} \\
		\begin{block}{[c]c}
			0 & \hspace{0.3cm} 6  \\
			0 & \hspace{0.3cm} n_J \\   
			\D_5 FD_b & \hspace{0.3cm} 6 \\
			\D_5 FD_j & \hspace{0.3cm} n_J \\
		\end{block} 
	\end{blockarray} .
\end{equation} $\hfill \Box$

The input matrix $B(\eta,t)$ given in \eqref{eq:matrixB} can be
computed explicitly as follows. The proof is ommitted as straightforward.

\begin{proposition}
    \label{proposition:inputmatrix}
    The input matrix $B(\eta,t)$ in \eqref{eq:matrixB} can be computed as
    \begin{equation} \label{eq:matrixBfinal}
        B(\eta,t) 
        = 
        \begin{blockarray}{cc}
          n_J & \vspace{0.2cm} \\
          \begin{block}{[c]c}
            0            & \hspace{0.3cm} 6  \\
            0            & \hspace{0.3cm} n_J \\  
            \vM^{-1} \vS & \hspace{0.3cm} 6 + n_J \\
          \end{block} 
        \end{blockarray} .
    \end{equation} 
    $\hfill \blacksquare$
\end{proposition}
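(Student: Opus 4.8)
The plan is to observe that the only nonzero portion of $B(\eta,t)$ in \eqref{eq:matrixB} is the stacked block $[\D_5 FD_b; \D_5 FD_j]$, namely the partial derivatives of the base and joint forward dynamics with respect to their fifth argument $\vtau$. Since $FD = [FD_b; FD_j]$ by the partition in \eqref{eq:forwarddynamicsbase}--\eqref{eq:forwarddynamicsjoints}, this stack is exactly $\D_5 FD = \partial FD / \partial \vtau$, so it suffices to differentiate the closed-form forward dynamics \eqref{eq:forwarddynamics} with respect to $\vtau$ and then reassemble the block structure.

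First I would write out $FD(\vH,\vs,\vv,\vr,\vtau) = \vM^{-1}(\vs)\big[-\vh(\vH,\vs,\vv,\vr) + \vS\vtau\big]$ and note that the torque $\vtau$ enters only affinely, through the term $\vS\vtau$. In particular, neither $\vM^{-1}(\vs)$, nor the generalized bias vector $\vh(\vH,\vs,\vv,\vr)$, nor the constant selection matrix $\vS$ depends on $\vtau$. Because the input space $\R^{n_J}$ is a vector space, $\D_5$ coincides with the ordinary partial derivative $\partial/\partial\vtau$, and the differentiation collapses to
\[
	\D_5 FD = \frac{\partial FD}{\partial \vtau} = \vM^{-1}(\vs)\,\vS .
\]
Substituting this back into \eqref{eq:matrixB} merges the two bottom blocks of dimensions $6$ and $n_J$ into the single block $\vM^{-1}\vS$ of dimension $6+n_J$, which is precisely the claimed expression \eqref{eq:matrixBfinal}.

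There is essentially no hard step here: the conclusion follows immediately from the affine dependence of $FD$ on $\vtau$. The only points requiring any care are the correct identification of the fifth argument of $FD$ with $\vtau$ and the bookkeeping of the block-row dimensions when the base block (of $6$ rows) and the joint block (of $n_J$ rows) are concatenated into the single $(6+n_J)$-row block $\vM^{-1}\vS$. This is exactly why the authors isolate the result as a standalone proposition while omitting the routine computation.
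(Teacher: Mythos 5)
Your proposal is correct and is precisely the routine computation the authors have in mind: the paper omits the proof of Proposition \ref{proposition:inputmatrix} as straightforward, and your argument (differentiating the affine dependence of \eqref{eq:forwarddynamics} on $\vtau$ to get $\D_5 FD = \vM^{-1}\vS$ and stacking the base and joint blocks) is exactly that omitted computation.
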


\subsection{Forward dynamics linearization  via inverse dynamics}

In this section, we show that the geometric
linearization state and input matrices  \eqref{eq:matrixA} and \eqref{eq:matrixB} 
can be rewritten in terms of (the derivatives of) the inverse dynamics.
This is a not trivial task, in particular
as moving-base systems are underactuated and thus, strictly speaking, the inverse dynamics is not defined for arbitrary combinations of the system position, velocity, and acceleration. 
We can, however, provide moving-base systems with a non-physical\footnote{a technique that can be used
also in dynamic inversion for underactuated systems~\cite{saccon2012trajectory}} control input
, $\bar{\vtau}_b \in \R^6$, to obtain an (artificial) fully actuated system which we name the \textit{extended system}. 
This non-physical control input can be physically interpreted as a virtual wrench applied to the moving base and that controls all its degrees of freedom, turning the system into a fully actuated one. 
To this end, we define the \textit{extended input vector} $\bar{\vtau} \in \R^n$ as
\begin{equation} \label{eq:tauextended}
    \bar{\vtau} 
    := 
    \begin{bmatrix}
        \bar{\vtau}_b \\
        \vtau
    \end{bmatrix} ,
\end{equation}
and consequently the \textit{extended dynamics} as 
\begin{equation} \label{eq:dynamicsbiasextended}
    \vM \dot{\vnu} + \vh = \bar{\vtau} .
\end{equation}
Willing to partition \eqref{eq:dynamicsbiasextended} into 
base and joint dynamics, we can then rewrite it as
\begin{equation}
    \begin{bmatrix} 
        \vM_{11} & \vM_{12} \\
        \vM_{21} & \vM_{22}
    \end{bmatrix}
    \begin{bmatrix}
        \dot{\vv} \\
        \dot{\vr}
    \end{bmatrix} 
    +
    \begin{bmatrix}
        \vh_1 \\
        \vh_2
    \end{bmatrix}
    =
    \begin{bmatrix} 
        \bar{\vtau}_b \\
        \vtau
    \end{bmatrix} .
\end{equation}
More abstractly, \eqref{eq:dynamicsbiasextended}
can also be written similarly to \eqref{eq:dynsysgeneral} as
\begin{equation} \label{eq:dynsysgeneralextended}
    \dot{g}(t) 
    =
    \bar{f}(g,\bar{u},t),
\end{equation}
where as before $
    g(t) 
    =
    \left( \vH(t), \vs(t), \vv(t), \vr(t) \right)$
, while $
    \bar{f} 
    :
    G \times \R^n \times R 
    \rightarrow 
    TG$
, $
    (g,\bar{u},t) 
    \mapsto 
    \bar{f}(g,\bar{u},t)$ 
and $
    \bar{u} 
    =
    \bar{\vtau} \in \R^n$. 
We define the extended forward dynamics $
    \overline{FD} 
    :
    G \times \R^6 \times \R^{n_J} 
    \rightarrow
    \R^n$ 
of the extended system \eqref{eq:dynamicsbiasextended} as
\begin{align} 
    \overline{FD}(\vH,\vs,\vv,\vr, \bar{\vtau}_b, \vtau) 
    & :=
    \dot{\vnu}(\vH,\vs,\vv,\vr, \bar{\vtau}_b, \vtau) 
    \nonumber
    \\
    & \phantom{:}=
    \vM^{-1}(\vs) \big[ - \vh(\vH,\vs,\vv,\vr) + \bar{\vtau}(\bar{\vtau_b},\vtau) \big] ,
    \label{eq:extendedForwardDynamics}
\end{align}
which can be partition, similarly to \eqref{eq:forwarddynamicsbase} and \eqref{eq:forwarddynamicsjoints}, as 
$
    \overline{FD} 
    =
    [\overline{FD}_b ; \overline{FD}_j]
$ 
where
\begin{align}
    \overline{FD}_b (\vH,\vs,\vv,\vr, \bar{\vtau}_b, \vtau) 
    & :=
    \dot{\vv}(\vH,\vs,\vv,\vr, \bar{\vtau}_b, \vtau) 
    \\
    \overline{FD}_j (\vH,\vs,\vv,\vr, \bar{\vtau}_b, \vtau) 
    & :=
    \dot{\vr}(\vH,\vs,\vv,\vr, \bar{\vtau}_b, \vtau)
    , 
\end{align}
with 
$
    \overline{FD}_b 
    :
    G \times \R^6 \times \R^{n_J} 
    \rightarrow 
    \R^6
$ 
and 
$
    \overline{FD}_j 
    : 
    G \times \R^6 \times \R^{n_J} 
    \rightarrow 
    \R^{n_J}.
$ 
Because the extended system \eqref{eq:dynamicsbiasextended} is fully actuated, its inverse dynamics 
$
    \overline{ID} 
    :
    G \times \R^6 \times \R^{n_J} 
    \rightarrow
    \R^n
$
is always well defined and reads
\begin{equation} \label{eq:inversedynamics}
    \overline{ID}(\vH,\vs,\vv,\vr,\dot{\vv},\dot{\vr}) 
    := 
    \bar{\vtau}(\vH,\vs,\vv,\vr,\dot{\vv},\dot{\vr}) 
    =
    \vM(\vs) \: \dot{\vnu}(\dot{\vv},\dot{\vr}) + \vh(\vH,\vs,\vv,\vr) ,
\end{equation}
which can be partitioned as $
    \overline{ID} 
    =
    [\overline{ID}_b ; \overline{ID}_j]$ 
with
\begin{align} 
    \overline{ID}_b (\vH,\vs,\vv,\vr,\dot{\vv},\dot{\vr}) 
    &\phantom{:}=
    \vM_{11}(\vs) \dot{\vv} + \vM_{12}(\vs) \dot{\vr} + \vh_1 (\vH,\vs,\vv,\vr) 
    \label{eq:inversedynamicsbase} 
    \\
    \overline{ID}_j (\vH,\vs,\vv,\vr,\dot{\vv},\dot{\vr}) 
    &\phantom{:}=
    \vM_{21}(\vs) \dot{\vv} + \vM_{22}(\vs) \dot{\vr} + \vh_2 (\vH,\vs,\vv,\vr) 
    \label{eq:inversedynamicsjoints} ,
\end{align}
where 
$
    \overline{ID}_b 
    :
    G \times \R^6 \times \R^{n_J} 
    \rightarrow
    \R^6
$ 
and 
$
    \overline{ID}_j 
    : 
    G \times \R^6 \times \R^{n_J} 
    \rightarrow
    \R^{n_J} .
$ 

We can now show that the state matrix of the forward dynamics linearization \eqref{eq:matrixA} can be expressed in terms of the extended inverse dynamics derivatives and the inverse of the mass matrix.

\begin{proposition}
    \label{proposition:inverserelation}
    The geometric linearization state matrix $A(\eta ,t)$ 
    in \eqref{eq:matrixA} can be rewritten in terms of the derivatives of the extended inverse dynamics \eqref{eq:inversedynamics} and the inverse of the mass matrix $\vM$ as
    \begin{align} 
        &A(\eta,t) 
        =
        \nonumber
        \\
        &\begin{blockarray}{ccccc}
          6 & n_J & 6 & n_J & \vspace{0.2cm} \\
          \begin{block}{[cccc]c}
            - \vv \times & 0 & I & 0 & \hspace{0.3cm} 6   \\
                       0 & 0 & 0 & I & \hspace{0.3cm} n_J \\
            - \vM^{-1} \D_1 \overline{ID} \circ \D L_H (I) 
            & 
            - \vM^{-1} \D_2 \overline{ID}
            &
            - \vM^{-1} \D_3 \overline{ID}
            &
            - \vM^{-1} \D_4 \overline{ID}
            & \hspace{0.3cm} 6 + n_J \\
          \end{block}
        \end{blockarray} ,
        \label{eq:matrixAinvdyn}
    \end{align}
    where $\overline{ID}$ is the extended inverse dynamics as in \eqref{eq:inversedynamics} and its derivatives are evaluated at $\overline{ID}( \vH, \vs, \vv, \vr, \dot{\vv}, \dot{\vr} )$. 
    $\hfill \blacksquare$
\end{proposition}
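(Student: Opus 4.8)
The plan is to exploit the fact that the extended forward and inverse dynamics are mutually inverse in the acceleration/torque channel, so that the state-Jacobian of one is obtained from the state-Jacobian of the other upon multiplication by $-\vM^{-1}$.

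First I would observe that the underactuated forward dynamics \eqref{eq:forwarddynamics} agrees with the extended forward dynamics \eqref{eq:extendedForwardDynamics} evaluated at $\bar{\vtau}_b = 0$, since $\vS\vtau = [0_{6\times 1};\vtau]$ coincides with $\bar{\vtau}$ at $\bar{\vtau}_b = 0$. As the two maps differ only in how the input enters, their derivatives with respect to the state arguments $\vH,\vs,\vv,\vr$ coincide, i.e. $\D_i FD = \D_i\overline{FD}$ for $i=1,\ldots,4$. This reduces the claim to expressing the state-derivatives of $\overline{FD}$ in terms of those of $\overline{ID}$.

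Next I would record the fundamental composition identity stating that, for every fixed state, the extended inverse dynamics inverts the extended forward dynamics in the acceleration variables,
\begin{equation*}
    \overline{ID}\big(\vH,\vs,\vv,\vr,\overline{FD}(\vH,\vs,\vv,\vr,\bar{\vtau})\big) = \bar{\vtau} .
\end{equation*}
This is immediate from the explicit formulas: substituting \eqref{eq:extendedForwardDynamics} into \eqref{eq:inversedynamics} makes $\vM\vM^{-1}$ collapse to the identity and cancels the two copies of $\vh$, leaving $\bar{\vtau}$. I would also note that, since \eqref{eq:inversedynamics} is affine in the acceleration $\dot{\vnu}=(\dot{\vv},\dot{\vr})$ with slope $\vM$, the Jacobian of $\overline{ID}$ with respect to its last two (acceleration) arguments is exactly the mass matrix $\vM$.

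The crux is then to differentiate the composition identity with respect to each state argument $i\in\{1,\ldots,4\}$, holding $\bar{\vtau}$ fixed. Because the right-hand side is constant, the chain rule together with the acceleration-Jacobian just computed yields
\begin{equation*}
    \D_i\overline{ID} + \vM\,\D_i\overline{FD} = 0, \qquad i=1,\ldots,4,
\end{equation*}
where $\D_i\overline{FD} = [\D_i\overline{FD}_b;\D_i\overline{FD}_j]$. Solving for the forward-dynamics block gives $\D_i\overline{FD} = -\vM^{-1}\D_i\overline{ID}$, and combining with the first step, $\D_i FD = -\vM^{-1}\D_i\overline{ID}$ for $i=1,\ldots,4$. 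Stacking the base and joint rows reproduces the bottom block row of \eqref{eq:matrixAinvdyn}; for the $\vH$-column the very same left-trivialization $\D L_H(I)$ appears in both \eqref{eq:matrixA} and \eqref{eq:matrixAinvdyn}, and since $\vM^{-1}$ depends only on $\vs$ it factors cleanly through the composition. The first two (kinematic) block rows are identical in the two matrices, which completes the derivation.

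I expect the main obstacle to be the bookkeeping in the implicit differentiation: one must carefully track that $\overline{ID}$ depends on the state both directly and indirectly through the acceleration slot fed by $\overline{FD}$, and recognize that precisely the acceleration-Jacobian block equals $\vM$ --- this is what makes $\vM^{-1}$ surface. A secondary point to state cleanly is the legitimacy of passing from the physical underactuated system, for which $ID$ is undefined, to the fully-actuated extended system: this is justified because the state-derivatives are insensitive to the input channel, so the replacement $FD=\overline{FD}|_{\bar{\vtau}_b=0}$ leaves the relevant blocks untouched.
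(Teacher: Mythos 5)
Your proposal is correct and follows essentially the same route as the paper: both differentiate the composition identity $\overline{ID}\circ\overline{FD}=id$ with respect to the state arguments, identify the acceleration-Jacobian of $\overline{ID}$ with the mass matrix $\vM$ to solve for $\D_i\overline{FD}=-\vM^{-1}\D_i\overline{ID}$, and use $FD=\overline{FD}|_{\bar{\vtau}_b=0}$ to transfer the result to the underactuated system. The only difference is cosmetic ordering (you invoke the restriction to $\bar{\vtau}_b=0$ at the start, the paper at the end).
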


\noindent
\textit{Proof:} The extended forward and extended inverse dynamics are related to each other through
\begin{equation} \label{eq:IDFDidentitygeneral}
    \overline{ID} \circ \overline{FD} 
    =
    id.
\end{equation}
Evaluated at an arbitrary extended joint torques vector $\bar{\vtau} \in \R^n$, \eqref{eq:IDFDidentitygeneral} reads
\begin{equation} 
    \overline{ID} \big( \vH, \vs, \vv, \vr,
    \overline{FD}_b(\vH,\vs,\vv,\vr, \bar{\vtau}_b, \vtau),
    \overline{FD}_j(\vH,\vs,\vv,\vr, \bar{\vtau}_b, \vtau)
    \big) 
    = 
    \bar{\vtau} ,
\end{equation}
which in extended, matrix form is given by
\begin{equation}
    \begin{bmatrix} \label{eq:IDFDidentity}
        \overline{ID}_b \big( \vH, \vs, \vv, \vr,
        \overline{FD}_b (\vH,\vs,\vv,\vr,\bar{\vtau}_b,\vtau),
        \overline{FD}_j (\vH,\vs,\vv,\vr,\bar{\vtau}_b,\vtau)
        \big) \\
        \overline{ID}_j \big( \vH, \vs, \vv, \vr,
        \overline{FD}_b (\vH,\vs,\vv,\vr,\bar{\vtau}_b,\vtau),
        \overline{FD}_j (\vH,\vs,\vv,\vr,\bar{\vtau}_b,\vtau)
        \big)
    \end{bmatrix}
    = 
    \begin{bmatrix}
        \bar{\vtau}_b \\
        \vtau
    \end{bmatrix} .
\end{equation}
For the sake of readability, we define 
\begin{align}
	\dot{\vv} 
	&:=
	\overline{FD}_b(\vH,\vs,\vv,\vr, \bar{\vtau}_b,\vtau) \quad\text{and}
	\\
	\dot{\vr} 
	&:=
	\overline{FD}_j(\vH,\vs,\vv,\vr, \bar{\vtau}_b,\vtau) .
\end{align}
Differentiating \eqref{eq:IDFDidentity} with respect to $\vH, \vs, \vv$ and $\vr$,  applying the chain rule, we get
\begin{equation} \label{eq:IDFDrelation}
    \begin{bmatrix}
        \D_i \overline{ID}_b \\
        \D_i \overline{ID}_j
    \end{bmatrix}
    +
    \begin{bmatrix}
    	\D_5 \overline{ID}_b & \D_6 \overline{ID}_b \\
    	\D_5 \overline{ID}_j & \D_6 \overline{ID}_j
    \end{bmatrix} 
    \begin{bmatrix}
    	\D_i \overline{FD}_b \\
    	\D_i \overline{FD}_j
    \end{bmatrix}
    =
    0 ,
\end{equation}
with $i \in \{1, 2, 3, 4\}$, as $\D_i \overline{ID}$ represents the left-trivialized partial derivative of the extended inverse dynamics with respect to the four system state variables. 
In \eqref{eq:IDFDrelation} the arguments of the inverse and forward dynamics functions are 
$\overline{ID}_b(\vH,\vs,\vv,\vr,\dot{\vv},\dot{\vr})$, 
$\overline{ID}_j(\vH,\vs,\vv,\vr,\dot{\vv},\dot{\vr})$, 
$\overline{FD}_b(\vH,\vs,\vv,\vr, \bar{\vtau}_b,\vtau)$ and 
$\overline{FD}_j(\vH,\vs,\vv,\vr, \bar{\vtau}_b,\vtau)$, 
which have been left out for the sake of readability. 
From \eqref{eq:inversedynamicsbase} and \eqref{eq:inversedynamicsjoints}, it follows that
\begin{equation} \label{eq:IDderivative}
    \begin{bmatrix}
    	\D_5 \overline{ID}_b & \D_6 \overline{ID}_b \\
    	\D_5 \overline{ID}_j & \D_6 \overline{ID}_j
    \end{bmatrix} 
    =
    \begin{bmatrix}
        \vM_{11} & \vM_{12} \\
        \vM_{21} & \vM_{22}
    \end{bmatrix}
    =
    \vM
\end{equation}
Substituting \eqref{eq:IDderivative} into \eqref{eq:IDFDrelation} and pre-multiplying with $\vM^{-1}$ gives
\begin{equation} 
	\begin{bmatrix} 
	    \D_i \overline{FD}_b(\vH,\vs,\vv,\vr, \bar{\vtau}_b,\vtau) \\
	    \D_i \overline{FD}_j(\vH,\vs,\vv,\vr, \bar{\vtau}_b,\vtau)
	\end{bmatrix} 
	=
	- \vM^{-1} 
	\begin{bmatrix}
	    \D_i \overline{ID}_b(\vH,\vs,\vv,\vr,	    \dot{\vv},\dot{\vr} ) \\
	    \D_i \overline{ID}_j(\vH,\vs,\vv,\vr,	    \dot{\vv},\dot{\vr} )
	\end{bmatrix} . \label{eq:IDFDrelationextended}
\end{equation}
The equality \eqref{eq:IDFDrelationextended} can be used within the original underactuated dynamics \eqref{eq:dynamicsbias} where, due to the definitions in \eqref{eq:forwarddynamics} and \eqref{eq:extendedForwardDynamics}, $\bar{\vtau}_b = 0_{6 \times 1}$ ensuring
\begin{align}
    FD_b (\vH,\vs,\vv,\vr,\vtau) 
    &=
    \overline{FD}_b (\vH,\vs,\vv,\vr,0_{6 \times 1},\vtau) \label{eq:FDvbar} \quad\text{and}
    \\
    FD_j (\vH,\vs,\vv,\vr,\vtau) 
    &=
    \overline{FD}_j   (\vH,\vs,\vv,\vr,0_{6 \times 1},\vtau) . \label{eq:FDrbar}
\end{align}
Substituting \eqref{eq:FDvbar} and \eqref{eq:FDrbar} on the left-hand side in \eqref{eq:IDFDrelationextended} results in the relation between the forward dynamics of the original moving-base system and extended inverse dynamics 
\begin{align} 
    \begin{bmatrix} 
	    \D_i FD_b(\vH,\vs,\vv,\vr,\vtau) \\
	    \D_i FD_j(\vH,\vs,\vv,\vr,\vtau)
	\end{bmatrix} 
	&=
	- \vM^{-1}
	\begin{bmatrix}
	    \D_i \overline{ID}_b 
	    \\
	    \D_i \overline{ID}_j 
	\end{bmatrix} 
	=
	- \vM^{-1}
	    \D_i \overline{ID} 
	\label{eq:IDFDrelationfinal}
\end{align}
where the inverse dynamics derivatives are 
evaluated at $(\vH,\vs,\vv,\vr,\dot{\vv},\dot{\vr})$.
Combining the original expression for $A(\eta,t)$ given by \eqref{eq:matrixA} with the relationship \eqref{eq:IDFDrelationfinal} relating the derivatives of the forward dynamics with those of the extended inverse dynamics, we obtain \eqref{eq:matrixAinvdyn}. $\hfill \Box$ 
\\

We conclude this section with an important observation and a definition that will be used in the following section. Given arbitrary values for $\vH, \vs, \vv, \vr, \dot{\vv}$ and $\dot{\vr}$, the extended inverse dynamics function $\overline{ID}$ in \eqref{eq:inversedynamics} will generally return a non-zero base torque $\bar{\vtau}_b$. Those combination of values for which the moving base torque is zero will be called \textit{consistent}:
\begin{definition}
    \label{def:consistent} 
    The inputs $\vH, \vs, \vv, \vr, \dot{\vv}$ and $\dot{\vr}$ of the extended inverse dynamics function given by \eqref{eq:inversedynamics} are called \textit{consistent} only if the extended inverse dynamics function returns a vector with the first six elements equal to zero. 
    $\hfill \blacksquare$
\end{definition}
When the inputs represent a \textit{physical} system (i.e. the sensors obtain data from a physical system or the inputs are numerically obtained through simulations), the inputs $\vH, \vs, \vv, \vr, \dot{\vv}$ and $\dot{\vr}$ of the extended inverse dynamics \eqref{eq:inversedynamics} are consistent (neglecting sensor noise or rounding errors). For this reason, commonly used moving-base algorithms in robot control such as the Recursive Newton Euler Algorithm for moving-base systems (RNEAmb) \cite[Table 9.6]{featherstone2008rigid} and Articulated Body Algorithm for moving-base systems (ABAmb) \cite[Table 9.4]{featherstone2008rigid} assume the inputs to be consistent in deriving and simplify the algorithm steps. Such a consistency assumption is however not valid for our goal, specifically because we need to compute derivatives by allowing for arbitrary nonphysically perturbation directions. For this reason, new extended versions of these algorithms are necessary and are developed in the following section.

\section{Left-Trivialized Linearization: recursive algorithm}
\label{chp:algorithmicaspects}
In the previous section and, more in particular, in Proposition~\ref{proposition:inverserelation}, we have demonstrated how the computation of the 
left-trivialize linearization (cf. Proposition~\ref{proposition:linearization})
can be obtain once the
derivative of the 
(extended) inverse dynamics and the inverse of the mass matrix are known.
This section provides new recursive algorithms for this purpose. More specifically, the section is structured in as follows. In Subsection~\ref{sec:listofsymbolsalgos} we provide a list of symbols commonly used in all algorithms. In Subsection~\ref{sec:EIDA}, we present a recursive algorithm
for the extended Inverse dynamics. An algorithm for computing its derivatives is then provided in Subsection~\ref{sec:diffEIDA}. The section concludes with the 
inverse mass matrix algorithm for moving-base multibody systems, presented in 
Subsection~\ref{sec:IMMAmb}. A sufficient familiarity with the original recursive  algorithms presented in  \cite{featherstone2008rigid} is a prerequisite for being able to fully appreciate the content of this section.

\subsection{List of symbols for the recursive algorithms}
\label{sec:listofsymbolsalgos}

The kinematic and dynamic quantities presented in this subsection, and in particular in Table~\ref{tab:rigidbodydynamics},  are used throughout all algorithms presented in this section. 
The notation employed is based on the coordinate frames and multibody notation 
detailed in \cite{traversaro2019multibodyv2} and concisely reported also in Section~\ref{chp:preliminaries}.
It is suggested to skip this section at first read (or to read it diagonally) and to return to it when deemed necessary for the understanding of the algorithm details.

\noindent The following shorthand notation is also employed to improve readability:
\begingroup
\renewcommand{\arraystretch}{1.2}
\begin{table}[H]
\centering
\begin{tabular}{llll}
$\vv_i       = \ls^i\vv_{A,i} \quad$
& $\vv_{\cJ i} = \ls^i\vv_{\lambda(i),i} \quad$
&  $\quad \vf_{\cB i}  = \ls_i\vf_{\cB i}$ 
&  $\quad \bbM_{\cB i} = \ls_i^\pc\!\!\bbM^{\cB i}_i$
\\
$\va_i       = \ls^i\va_{A,i} \quad$
& $\vc_{\cJ i} = \ls^i\vc_{\lambda(i),i} \quad$ 
& $\quad \vf_{\cE i}  = \ls_i\vf_{\cE i}$
& $\quad \vb_{\cB i}  = \ls_i\vb_{\cB i}$
\\
$\vm_{\cB i} = \ls_i\vm_{\cB i} \quad$ 
& $\vd_{\cJ i} = \ls^i\vd_{\lambda(i),i} \quad$ 
& $\quad \vf_{\cI i}  = \ls_i\vf_{\cI i}$
& $\quad \vU_{\cB i}  = \ls_i\vU_{\cB i}$
\\
& $\vGamma_{\cJ i} = \ls^i\vGamma_{\lambda(i),i} \quad$
& $\quad \vf_{\cJ i}  = \ls_i\vf_{\cJ i}$ 
&  
\end{tabular}
\end{table}
\endgroup

\subsection{Extended Inverse Dynamics Algorithm}
\label{sec:EIDA}

In what follows we detail the 
Extended Inverse Dynamics Algorithm for moving-base systems (EIDAmb),
used to compute the extended inverse dynamics $\overline{ID}$ given by \eqref{eq:inversedynamics}. The original RNEAmb presented in \cite[Section 5.3]{featherstone2008rigid} only partly fulfills this purpose, as it computes the joint torques $\vtau$ under the assumption that its inputs are consistent (see Definition \ref{def:consistent}). 
By adding lines of the Generalized Bias Wrench Algorithm for moving-base systems (GBWAmb) \cite[Table 9.5]{featherstone2008rigid} and the Composite Rigid Body Algorithm for moving-base systems (CRBAmb) \cite[Table 9.5]{featherstone2008rigid}, we can modify the RNEAmb to also compute the non-physical control input $\bar{\vtau}_b$, so that the extended input vector $\bar{\vtau}$ \eqref{eq:tauextended} can be constructed.

From \eqref{eq:inversedynamics}, we see that $\bar{\vtau}_b = \vM_{11} \dot{\vv} + \vM_{12} \dot{\vr} + \vh_1$. 
The variables $\vM_{11}, \vM_{12}, \dot{\vv}$ and $\vh_1$ can be translated to rigid body dynamics variables used in recursive algorithms as
\begin{align}
    \vM_{11} &= \bbM^c_{\cB 0} , \\
    \dot{\vv} &= \va_0 , \label{eq:accelerations} \\
    \vM_{12} &= \ls_0\vF \quad \text{and}
    \\
    \vh_1 &= \vb^{vp}_{\cB 0} ,
\end{align}
where for \eqref{eq:accelerations}, we made use of the relation between intrinsic and apparent acceleration as seen in \cite[Section 5.4]{traversaro2019multibodyv2}.
In terms of the rigid body dynamics variables, the non-physical control input $\bar{\vtau}_b$ can be thus written as
\begin{equation} \label{eq:inversedynamicsbaseRNEAmb}
    \bar{\vtau}_b (\vH,\vs,\vv,\vr,\dot{\vv},\dot{\vr}) 
    =
    \bbM^c_{\cB 0}(\vs) \va_0(\dot{\vv}) 
    +
    \ls_0\vF(\vs) \dot{\vr} 
    +
    \vb^{vp}_{\cB 0}(\vH,\vs,\vv,\vr,\dot{\vv},\dot{\vr}) ,
\end{equation}
where $\vb^{vp}_{\cB 0} \in \R^6$ is the bias wrench of body 0 with zero joint acceleration and 
$\ls_0\vF = [\ls_0\vF_{\cB 1} \enspace \ls_0\vF_{\cB 2} \; ... \; \ls_0\vF_{\cB n_J}] \in \R^{6 \times n_J}$ the required wrenches to support unit accelerations.
Note that $\ls_0\vF_{\cB i}$ is defined as
\begin{align} \label{eq:defineF}
    \ls_i\vF_{\cB i} 
    :=&
    \bbM^c_{\cB i} \vGamma_{\cJ i} \in \R^6 , \\
    \ls_0\vF_{\cB i} 
    =&
    \ls_0\vX^i \ls_i\vF_{\cB i} .
\end{align}
In the EIDAmb, the acceleration term $\va_i^r$ is defined as 
\begin{equation}
    \va_i^r 
    :=
    \ls^i\va_{0,i} + \ls^i\va_{grav} ,
\end{equation}
thus $\va_i^r$ is the acceleration of body $i$ relative to the moving-base frame $0$ \emph{plus} the gravitational acceleration. 
This acceleration is then used to compute the bias wrench, which is later used to compute the acceleration $\va_{A,i}$ of body $i$ with respect to the inertial frame $A$.  

\begingroup
\renewcommand{\arraystretch}{1.2}
\begin{table}[H]
\caption{Kinematic, dynamics, and set theoretic quantities used in the recursive algorithms presented in this section.}
\label{tab:rigidbodydynamics}
\centering
\begin{tabular}{|p{1.4cm}|p{1cm}|p{9.2cm}|}
\hline
\textbf{EG} & \textbf{Dim.} & \textbf{Explanation} \\ \hline
$\ls_C^\pc \bbM^{\cB i}_C$ 
   & $\R^{6 \times 6}$
   & Inertia matrix of body $i$
   \\ \hline
$\ls_C^\pc \bbM^{\cB i,A}_C$ 
   & $\R^{6 \times 6}$
   & Articulated-body inertia matrix of body $i$
   \\ \hline
$\ls_C^\pc \bbM^{\cB i,a}_C$ 
   & $\R^{6 \times 6}$
   & Apparent articulated-body inertia matrix of body $i$
   \\ \hline
$\ls_C^\pc \bbM^{\cB i,c}_C$ 
   & $\R^{6 \times 6}$
   & Composite rigid body inertia matrix of body $i$
   \\ \hline
$\ls^C\vv_{A,i}$  
   & $\R^6$
   & Twist or spatial velocity of frame $i$ w.r.t frame $A$
   \\ \hline
$\ls^C\dot{\vv}_{A,i}$ 
   & $\R^6$
   & Apparent acceleration of frame $i$ w.r.t frame $A$
   \\ \hline
$\ls^C\va_{A,i}$ 
   & $\R^6$
   & Intrinsic acceleration of frame $i$ w.r.t frame $A$
   \\ \hline
$\ls^C\va_{grav}$ 
   & $\R^6$
   & Intrinsic gravitational acceleration
   \\ \hline
$\ls^C\va_i^r$ 
   & $\R^6$
   & Intrinsic acceleration relative to the moving-base acceleration, plus the gravitational acceleration of body $i$
   \\ \hline
$\ls^C\va_i^{vp}$ 
   & $\R^6$
   & Intrinsic acceleration that only accounts for the velocity product terms of body $i$
   \\ \hline
$\ls_C\vm_{\cB i}$
   & $\R^6$
   & Spatial momentum of body $i$
   \\ \hline
$\ls^C\vGamma_{\lambda(i),i}$ 
   & $\R^6$
   & Joint velocity subspace matrix of joint $i$
   \\ \hline
$\ls^i\vv_{\lambda(i),i}$ 
   & $\R^6$
   & Velocity of joint $i$
   \\\hline
$\ls_C\vb_{\cB i}$ 
   & $\R^6$   
   & Bias wrench acting on body $i$
   \\ \hline   
$\ls_C\vb^c_{\cB i}$ 
   & $\R^6$  
   & Composite rigid body bias wrench acting on body $i$
   \\ \hline
$\ls_C\vb^{vp}_{\cB 0}$
   & $\R^6$ 
   & Bias wrench of the moving-base with zero joint acceleration
   \\ \hline
$\ls^C\vX_D$ 
   & $\R^{6 \times 6}$
   & Velocity transformation from frame $D$ to frame $C$
   \\ \hline
$\ls_C\vX^D$ 
   & $\R^{6 \times 6}$
   & Wrench transformation from frame $D$ to frame $C$
   \\ \hline
$\ls_C\vU_{\cB i}$ 
   & $\R^6$
   & Subexpression used in ABA
   \\ \hline
$\vD_{\cB i}$ 
   & $\R$
   & Subexpression used in ABA
   \\ \hline
$\vu_{\cB i}$ 
   & $\R$
   & Subexpression used in ABA
   \\ \hline
$\ls_C\vF_{\cB i}$ 
   & $\R^6$
   & Required wrench to support unit acceleration of joint $i$
   \\ \hline
$\mathcal{F}_i$
   & $\R^{6 \times n_J}$
   & Wrench set collecting the contributions of the supporting tree rooted at $i$
   \\ \hline
$\mathcal{P}_i$
   & $\R^{6 \times n_J}$
   & Motion set which contains the contributions of all parents of joint $i$
   \\ \hline
\end{tabular}
\end{table}
\endgroup

Table~\ref{tab:EIDAmb} presents the full algorithm\footnote{We use $[i,j]$ and alike to indicate specific submatrices. For example, $\vH[1\!\!:\!\!3,4]$ means we consider the column vector obtained
taking the first three elements of the fourth column of the $4\times4$ matrix $\vH$.} (EIDAmb).
The function jcalc(jtype(i),$\vs_i$) appearing in the the algorithm acts as a look-up table, from where joint-type-specific functions are retrieved. 
These joint-type-specific functions compute the velocity transformation matrix $\ls^i\vX_{\lambda(i|i}$ and the joint velocity subspace matrix $\vGamma_{\cJ i}$. 
The explicit joint-type-specific functions can be found in \cite[Appendix B.2]{bos2019} or can be straightforwardly derived by using, e.g., \cite[Section 4.4]{featherstone2008rigid}.

\begingroup
\renewcommand{\arraystretch}{1.2}
\begin{table}[H]
\caption{EIDAmb.}
\label{tab:EIDAmb}
\centering
\begin{tabular}{|p{0.8cm}|p{5.8cm}|}
\hline
\multicolumn{2}{|l|}{\textbf{Inputs:} 
   model, $\vs, \vr, \dot{\vr}, \ls^A\vH_0, \ls^A\vv_{A,0}, \ls^0\va_{A,0}, \ls^A\va_{grav}$}
   \\ \hline
\textbf{Line} & \textbf{EIDAmb} \\ \hline
1  & $\ls^0\vH_A = \ls^A\vH_0^{-1}$
   \\
2  & $\ls^0\vR_A = \ls^0\vH_A[1\!\!:\!\!3,1\!\!:\!\!3]$
   \\
3  & $\ls^0\vo_A = \ls^0\vH_A[1\!\!:\!\!3,4]$
   \\
4  & $\ls^0\vX_A = 
     \begin{bmatrix}
       \ls^0\vR_A & \ls^0\vo_A^\wedge \ls^0\vR_A \\
       0_{3\times3} & \ls^0\vR_A
    \end{bmatrix}$
   \\   
5  & $\vv_0 = \ls^0\vX_A \ls^A\vv_{A,0}$
   \\
6  & $\va_0^r = \ls^0\vX_A \ls^A\va_{grav}$
   \\
7* & $\va_0^{vp} = \va_0^r$
   \\
8  & $\bbM_{\cB 0}^c = \bbM_{\cB 0}$
   \\
9 & $\vm_{\cB 0} = \bbM_{\cB 0} \vv_0$
   \\   
10 & $\vb_{\cB 0}^c = \bbM_{\cB 0} \va_0^r + \vv_0 \bar{\times}^* 
      \vm_{\cB 0}$ 
   \\ 
11*& $\vb_{\cB 0}^{vp} = \vb_{\cB 0}^c$ 
   \\
12 & $\mathbf{for}\ i=1\ \mathbf{to}\ n_B\ \mathbf{do}$  
   \\
13 & $\quad [ \ls^i\vX_{\lambda(i)|i}, \vGamma_{\cJ i} ] = $
     jcalc(jtype($i$)$, \vs_i$) 
   \\
14 & $\quad \vv_{\cJ i} = \vGamma_{\cJ i} \vr_i$ 
   \\
15 & $\quad \ls^i\vX_{\lambda(i)} = \ls^i\vX_{\lambda(i)|i} 
     \ls^{\lambda(i)|i}\vX_{\lambda(i)}$ 
   \\
16 & $\quad \vv_i = \ls^i\vX_{\lambda(i)} \vv_{\lambda(i)} + \vv_{\cJ i}$ 
   \\
17 & $\quad \va_i^r = \ls^i\vX_{\lambda(i)} \va_{\lambda(i)}^r + 
     \vGamma_{\cJ i} \dot{\vr}_i + \vv_i \times \vv_{\cJ i}$ 
   \\
18*& $\quad \va_i^{vp} = \ls^i\vX_{\lambda(i)} \va_{\lambda(i)}^{vp} + 
     \vv_i \times \vv_{\cJ i}$ 
   \\
19 & $\quad \bbM_{\cB i}^c = \bbM_{\cB i}$
   \\
20 & $\quad \vm_{\cB i} = \bbM_{\cB i} \vv_i$
   \\   
21 & $\quad \vb_{\cB i}^c = \bbM_{\cB i} \va_i^r + \vv_i \bar{\times}^* 
      \vm_{\cB i}$ 
   \\ 
22*& $\quad \vb_{\cB i}^{vp} = \bbM_{\cB i} \va_i^{vp} + 
      \vv_i \bar{\times}^* \vm_{\cB i}$ 
   \\ 
23 & $\mathbf{end}$
   \\
24 & $\mathbf{for}\ i=n_B\ \mathbf{to}\ 1\ \mathbf{do}$ 
   \\
25 & $\quad \bbM_{\cB \lambda(i)}^c = \bbM_{\cB \lambda(i)}^c +
     \ls_{\lambda(i)}\vX^i ~ \bbM_{\cB i}^c \ls^i\vX_{\lambda(i)}$
   \\
26 & $\quad \vb_{\cB \lambda(i)}^c = \vb_{\cB \lambda(i)}^c + 
     \ls_{\lambda(i)}\vX^i ~ \vb_{\cB i}^c$
   \\
27*& $\quad \vb_{\cB \lambda(i)}^{vp} = \vb_{\cB \lambda(i)}^{vp} + 
     \ls_{\lambda(i)}\vX^i ~ \vb_{\cB i}^{vp}$
   \\
28 & $\mathbf{end}$ 
   \\   
\hline
\multicolumn{2}{|c|}{\textit{continued on the next page.}}
   \\
\hline
\end{tabular}
\end{table}
\endgroup

\begingroup
\renewcommand{\arraystretch}{1.2}
\begin{table}[H]
\centering
\begin{tabular}{|p{0.8cm}|p{5.8cm}|}
\hline
\multicolumn{2}{|c|}{\textit{continued from the previous page.}}
   \\
\hline   
29 & $\mathbf{for}\ i=1\ \mathbf{to}\ n_B\ \mathbf{do}$  
   \\
30 & $\quad \ls^i\va_{A,0} = \ls^i\vX_{\lambda(i)}
     \ls^{\lambda(i)}\va_{A,0}$
   \\
31 & $\quad \vtau_i = \vGamma_{\cJ i}^T ( \bbM_{\cB i}^c \ls^i\va_{A,0} + 
     \vb_{\cB i}^c )$
   \\ 
32** & $\quad \ls_i\vF_{\cB i} = \bbM_{\cB i}^c \vGamma_{\cJ i}$
   \\
33** & $\quad j = i$
   \\   
34** & $\quad \mathbf{while}\ \lambda(j) > 0$
   \\ 
35** & $\qquad \ls_{\lambda(j)}\vF_{\cB i} = \ls_{\lambda(j)}\vX^j \ls_j\vF_{\cB i}$
   \\
36** & $\qquad j = \lambda(j)$
   \\ 
37** & $\quad \mathbf{end}$
   \\
38** & $\quad \ls_ 0\vF_{\cB i} = \ls_0\vX^j \ls_j\vF_{\cB i}$
   \\
39 & $\mathbf{end}$ 
   \\ 
40***& $\bar{\vtau}_b = \bbM_{\cB 0}^c \ls^0\va_{A,0} + 
      \ls_0\vF \dot{\vr} + \vb_{\cB 0}^{vp}$
   \\ \hline
\multicolumn{2}{|l|}{\textbf{Output:} 
   $\overline{ID} = \bar{\vtau} = [\vtau ; \bar{\vtau}_b]$}
   \\ \hline
\end{tabular}
\end{table}
\endgroup

\noindent
\textit{Remarks:} 
\begin{itemize}
    \item Lines marked with * origin from the GBWAmb, lines marked with ** origin from the CRBAmb, and unmarked lines origin from the RNEAmb. 
    \item The line marked with *** computes the extended inverse dynamics of the moving base as given in \eqref{eq:inversedynamicsbaseRNEAmb}.
    \item Arbitrary inputs are generally not consistent (see Definition \ref{def:consistent}). 
    If one wants the inputs of the extended inverse dynamics function \eqref{eq:inversedynamics} to be consistent, one can add the computation $\va_0 = - ( \bbM_{\cB 0}^c )^{-1} \vb_{\cB 0}^c$ between line 28 and 29 instead of using it as an input to the algorithm. 
    This guarantees the inputs of the extended inverse dynamics function to be consistent, at the cost of losing the freedom to choose $\va_0$. 
    \item External wrenches are neglected in the algorithms presented above (see Assumption \ref{assumption:externalforces}).
    \item The velocity of the moving base as input of the EIDAmb, $\ls^A\vv_{A,0}$ is chosen to be expressed in frame $A$, as it is often measured in that frame. 
    The acceleration of the moving base as input of the EIDAmb, $\ls^0\va_{A,0}$, is chosen to be expressed in frame $0$, as it is often computed through the RNEAmb in frame $0$. 
    If one desires to express either of both quantities in a different frame, the algorithm can be straightforwardly modified. 
\end{itemize}

\subsection{Computing the derivatives of the extended inverse dynamics}
\label{sec:diffEIDA}

This section proposes \emph{four} new recursive algorithms that  
compute the left-trivialized derivatives of the extended inverse dynamics \eqref{eq:inversedynamics} with respect to $\vH, \vs, \vv$ and $\vr$, respectively. 
To improve the readability and reduce the heaviness of the expressions, we employ a shorthand notation. Namely,
the left-trivialized derivative of an arbitrary function $x(\vH,\vs,\vv,\vr)$ with respect to $\vH$ is written in shorthand notation as 
$
    \tp x/\partial \vH :=
    \D_1 x(\vH,\vs,\vv,\vr) \circ \D L_H (I)
    , 
$
that is, in words, $\tp x/\partial g$ is defined as the \textit{left-trivialized derivative of the function $x$ with respect to Lie group element $g$}. 
Likewise, the partial derivatives (on vector spaces) of an arbitrary function $x(\vH,\vs,\vv,\vr)$ with respect to $\vs, \vv$ and $\vr$ are written in shortened notation as 
$
    {\partial x}/{\partial \vs} :=
    \D_2 x(\vH,\vs,\vv,\vr) , 
    {\partial x}/{\partial \vv} :=
    \D_3 x(\vH,\vs,\vv,\vr) 
$ and $
    {\partial x}/{\partial \vr} :=
    \D_4 x(\vH,\vs,\vv,\vr) ,
$
so that ${\partial x}/{\partial y}$ is defined as the standard \textit{partial derivative of $x$ with respect to the scalar (or vector) quantity $y$}.

In the left-trivialized derivatives algorithms, the computation of derivatives of the matrices $\ls^i\vX_{\lambda(i)|i}, \vF_{\cB i}$ and $\bbM^c_{\cB \lambda(i)}$ is required, which would result in 3D tensors (requiring 3 indeces). Because these matrices are only dependent on vector $\vs$, we choose to compute their derivatives with respect to the scalars $\vs_k$ for $k \in \R^{n_J}$, resulting in 2D tensors for which we can employ  standard matrix multiplication, without the need to resort to the more advanced tensor calculus notation.
As $\ls^i\vX_{\lambda(i)|i}$ is only dependent on the $i$-th entry of $\vs$, we only compute ${\partial \ls^i\vX_{\lambda(i)|i}}/{\partial \vs_i}$, which saves computational time.

Using the dependencies in the expression for the non-physical control input $\bar{\vtau}_b$ of \eqref{eq:inversedynamicsbaseRNEAmb}, we can omit unnecessary computations by computing the derivatives as 
\begin{align}
    \frac{\tp \bar{\vtau}_b}{\partial \vH} =&
        \frac{\tp \vb^{vp}_{\cB 0}}{\partial \vH} =
        \frac{\tp \vb^c_{\cB 0}}{\partial \vH}
        \label{eq:ltlinverseH} , \\
    \frac{\partial \bar{\vtau}_b}{\partial \vs} =&
        \frac{\partial \bbM^c_{\cB 0}}{\partial \vs} \va_0 
        +  \frac{\partial \ls_0\vF}{\partial \vs} \dot{\vr} 
        + \frac{\partial \vb^{vp}_{\cB 0}}{\partial \vs} 
        \label{eq:ltlinverses} , \\
    \frac{\partial \bar{\vtau}_b}{\partial \vv} =&
        \frac{\partial \vb^{vp}_{\cB 0}}{\partial \vv} =
        \frac{\partial \vb^c_{\cB 0}}{\partial \vv} 
        \label{eq:ltlinversev}  \\
    \frac{\partial \bar{\vtau}_b}{\partial \vr} =&
        \frac{\partial \vb^{vp}_{\cB 0}}{\partial \vr} =
        \frac{\partial \vb^c_{\cB 0}}{\partial \vr}
        \label{eq:ltlinverser} .
\end{align}
The last steps in \eqref{eq:ltlinverseH}, \eqref{eq:ltlinversev} and \eqref{eq:ltlinverser} are derived from the equality 
\begin{equation}
    \dfrac{\tp \va^r_i}{\partial \vH} = \dfrac{\tp \va^{vp}_i}{\partial \vH} ,
\end{equation}
which also holds for the derivatives with respect to $\vv$ and $\vr$, even though $\va^r_i$ is not generally equal to $\va^{vp}_i$.
\\

\noindent\textbf{First algorithm ($\vH$).} We start discussing the preliminaries required to understand the first algorithm of the four, the one that computes the left-trivialized derivative of the extended inverse dynamics with respect to the transformation matrix $\vH ( = \ls ^A\vH_0 )$. This is the most complicated of all four. The complexity lays, in particular, in the computation of the left-trivialized derivative of $\ls^0\vX_A$. 
We choose to compute the quantities $\ls^0\vX_A \ls^A\vv_{A,0}$ and $\ls^0\vX_A \ls^A\va_{grav}$ as a whole, as this allows us to compute the derivative of a vector instead of the derivative of a matrix. 
This is explained in the following proposition. 

\begin{proposition}
    The left-trivialized derivatives of $\ls^0\vX_A \ls^A\vv_{A,0}$ and $\ls^0\vX_A \ls^A\va_{grav}$ with respect to the transformation matrix $\ls ^A\vH_0$ are given by 
    \begin{align}
        \frac{\tp (\ls^0\vX_A \ls^A\vv_{A,0})}{\partial \ls^A\vH_0} 
        &= 
        \begin{bmatrix}
          \ls^0\vR_A \ls^A\vomega_{A,0}^\wedge \ls^A\vR_0 
          &
          \ls^0\vR_A (\ls^A\bmv_{A,0} - \ls^A\vo_0^\wedge \ls^A\vomega_{A,0} )^\wedge \ls^A\vR_0
          \\
          0_{3 \times 3} 
          &
          \ls^0\vR_A \ls^A\vomega_{A,0}^\wedge \ls^A\vR_0
        \end{bmatrix} \label{eq:partialXvwrtH} 
        \shortintertext{and} 
        \frac{\tp (\ls^0\vX_A \ls^A\va_{grav})}{\partial \ls^A\vH_0} 
        &=
        \begin{bmatrix}
          \ls^0\vR_A \ls^A\valpha_{grav}^\wedge \ls^A\vR_0 
          &
          \ls^0\vR_A (\ls^A\bma_{grav} - \ls^A\vo_0^\wedge \ls^A\valpha_{grav} )^\wedge \ls^A\vR_0
          \\
          0_{3 \times 3} 
          &
          \ls^0\vR_A \ls^A\valpha_{grav}^\wedge \ls^A\vR_0
        \end{bmatrix} \label{eq:partialXawrtH} ,
    \end{align}
    where $\ls^A\vv_{A,0} = [\ls^A\bmv_{A,0} ; \ls^A\vomega_{A,0}]$ and $\ls^A\va_{grav} = [\ls^A\bma_{grav} ; \ls^A\valpha_{grav}]$. Here, $\bmv \in \R^3$ represents translational velocity, $\vomega \in \R^3$ rotational velocity, $\bma \in \R^3$ translational acceleration and $\valpha \in \R^3$ rotational acceleration. The terms $\ls^A\vv_{A,0}$ and $\ls^A\va_{grav}$ are independent of $\ls^A\vH_0$, as they are inputs to the algorithm. 
    $\null \hfill \blacksquare$
\end{proposition}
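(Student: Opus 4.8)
The plan is to recognize that the $6\times 6$ matrix $\ls^0\vX_A$ built in line~4 of Table~\ref{tab:EIDAmb} is exactly the adjoint action $\Ad_{\ls^0\vH_A}=\Ad_{(\ls^A\vH_0)^{-1}}$ of $\SE(3)$ on twists (its block form coincides with the standard $\Ad_{\vH}=\bigl[\begin{smallmatrix}\vR & \vo^\wedge\vR\\ 0 & \vR\end{smallmatrix}\bigr]$ in the $[\bmv;\bmomega]$ convention). Hence both quantities to be differentiated are of the form $\Ad_{\vH^{-1}}\mathbf{w}$ with $\vH:=\ls^A\vH_0$ and $\mathbf{w}$ a \emph{constant} $6$-vector — namely $\ls^A\vv_{A,0}$ or $\ls^A\va_{grav}$, which are inputs and thus independent of $\vH$. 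This lets me differentiate the adjoint applied to a fixed vector abstractly, without ever differentiating the individual matrix entries (this is the meaning of ``computing the derivative of a vector instead of a matrix'').

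First I would unwind the shorthand $\tp x/\partial\vH:=\D_1 x\circ\D L_H(I)$: on a direction $\xi\in\se(3)\cong\R^6$ it computes $\frac{d}{d\ep}\big|_{\ep=0} x\bigl(\vH\exp(\ep\,\xi^\wedge)\bigr)$, since $\D L_H(I)\cdot\xi=\vH\xi^\wedge$. Applying this to $x(\vH)=\Ad_{\vH^{-1}}\mathbf{w}$ and using $(\vH\exp(\ep\xi^\wedge))^{-1}=\exp(-\ep\xi^\wedge)\vH^{-1}$ together with $\Ad_{gh}=\Ad_g\Ad_h$, the perturbed vector factors as $\Ad_{\exp(-\ep\xi^\wedge)}\bigl(\Ad_{\vH^{-1}}\mathbf{w}\bigr)$. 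Because $\frac{d}{d\ep}\big|_0\Ad_{\exp(\ep\zeta^\wedge)}=\ad_\zeta$ and $\Ad_{\vH^{-1}}\mathbf{w}=\ls^0\vX_A\,\mathbf{w}=:\vu$ is $\ep$-independent, differentiation yields $-\ad_\xi\,\vu$, and by antisymmetry of the bracket $-\ad_\xi\,\vu=\ad_{\vu}\,\xi=(\vu\times)\,\xi$. Therefore
\[
  \frac{\tp(\ls^0\vX_A\,\mathbf{w})}{\partial\,\ls^A\vH_0}
  = \vu\times = (\ls^0\vX_A\,\mathbf{w})\times ,
\]
i.e. the derivative is just the $6$D cross-product matrix \eqref{eq:6dcrossproduct} of the frame-$0$ vector. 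Taking $\mathbf{w}=\ls^A\vv_{A,0}$ gives $\vu=\vv_0$, and $\mathbf{w}=\ls^A\va_{grav}$ gives its gravitational analogue.

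The remaining step — where the actual bookkeeping lives — is to rewrite $(\ls^0\vX_A\,\mathbf{w})\times$ in the frame-$A$ block form stated in the proposition. Writing $\ls^0\vX_A\,\mathbf{w}=[\bmv_0;\bmomega_0]$ with $\mathbf{w}=[\bmv;\vomega]$, line~4 gives $\bmomega_0=\ls^0\vR_A\,\vomega$ and $\bmv_0=\ls^0\vR_A\,\bmv+\ls^0\vo_A^\wedge\ls^0\vR_A\,\vomega$. The conjugation identity $(\vR x)^\wedge=\vR\,x^\wedge\vR^{\mathrm{T}}$ turns the diagonal blocks $\bmomega_0^\wedge$ into $\ls^0\vR_A\,\vomega^\wedge\,\ls^A\vR_0$, which already matches the claim. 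For the upper-right block $\bmv_0^\wedge$ the key simplification is $\ls^0\vo_A^\wedge\,\ls^0\vR_A=-\,\ls^0\vR_A\,\ls^A\vo_0^\wedge$, a consequence of $\ls^0\vo_A=-\,\ls^0\vR_A\,\ls^A\vo_0$ (from inverting $\ls^A\vH_0$) combined once more with the conjugation identity; substituting it gives $\bmv_0=\ls^0\vR_A(\bmv-\ls^A\vo_0^\wedge\,\vomega)$, hence $\bmv_0^\wedge=\ls^0\vR_A(\bmv-\ls^A\vo_0^\wedge\vomega)^\wedge\ls^A\vR_0$, which is the stated $(1,2)$ block.

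All the conceptual content sits in the short adjoint argument, so I expect the main obstacle to be purely the sign-sensitive frame-conversion algebra of the last paragraph — in particular pinning down the single minus sign in $\ls^0\vo_A^\wedge\ls^0\vR_A=-\ls^0\vR_A\ls^A\vo_0^\wedge$ — rather than anything structural. As an independent check one can also obtain both formulas by directly differentiating the entries of $\ls^0\vX_A$ through $\ls^0\vR_A=(\ls^A\vR_0)^{\mathrm{T}}$ and $\ls^0\vo_A=-\ls^0\vR_A\ls^A\vo_0$ under $\ls^A\vH_0\mapsto\ls^A\vH_0\exp(\ep\xi^\wedge)$; this is precisely the grind that the adjoint viewpoint avoids, but it confirms both the block structure and the signs.
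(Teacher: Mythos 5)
Your proof is correct, but it takes a genuinely different route from the paper's. The paper proceeds by brute force: it writes $\ls^0\vX_A \ls^A\vv_{A,0}$ explicitly as a pair of component functions $f_1(\vo,\vR) = \vR^T\bmv - \vR^T\vo^\wedge\vomega$ and $f_2(\vo,\vR)=\vR^T\vomega$ (with $\vR=\ls^A\vR_0$, $\vo=\ls^A\vo_0$), pushes the left-trivialized direction $\vH\vDelta^\wedge$ through the chain rule to identify the induced variations $\vR\vDelta_v$ of $\vo$ and $\vR\vDelta_\omega^\wedge$ of $\vR$, and then reads off the four blocks one by one. You instead observe that $\ls^0\vX_A=\Ad_{\vH^{-1}}$ acting on a constant $6$-vector $\mathbf{w}$, so that the left-trivialized derivative reduces to the standard identity $\tfrac{d}{d\ep}\big|_0\Ad_{\exp(-\ep\xi^\wedge)}\vu=-\ad_\xi\vu=(\vu\times)\xi$ with $\vu=\Ad_{\vH^{-1}}\mathbf{w}$, and only then convert $(\vu\times)$ into the frame-$A$ block form via $(\vR x)^\wedge=\vR x^\wedge\vR^T$ and $\ls^0\vo_A=-\ls^0\vR_A\ls^A\vo_0$. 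I checked the sign-sensitive steps ($-\ad_\xi\vu=\ad_\vu\xi$ by antisymmetry, and $\ls^0\vo_A^\wedge\ls^0\vR_A=-\ls^0\vR_A\ls^A\vo_0^\wedge$) and they are right, and your final blocks agree with \eqref{eq:partialXvwrtH} and \eqref{eq:partialXawrtH}. What your argument buys that the paper's does not make explicit is the structural intermediate result $\tp(\ls^0\vX_A\mathbf{w})/\partial\ls^A\vH_0=(\ls^0\vX_A\mathbf{w})\times$, i.e.\ the derivative is exactly the 6D cross-product matrix of the already-computed frame-$0$ quantity ($\vv_0\times$, resp.\ the cross-product matrix of $\ls^0\vX_A\ls^A\va_{grav}$); this is both a cleaner conceptual statement and a potential computational shortcut, at the cost of requiring the reader to accept the adjoint-derivative identity rather than only elementary matrix calculus.
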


\begin{proof}
In the following, only a proof for \eqref{eq:partialXvwrtH} will be given. 
A proof for \eqref{eq:partialXawrtH} follows straightforwardly by employing the
same techniques. 
In the following, for sake of readability, we use the abbreviations 
$\vH := \ls^A\vH_0$, 
$\vR := \ls^A\vR_0$, 
$\vo := \ls^A\vo_0$, 
$\vv := \ls^A\vv_{A,0}$, 
$\bmv := \ls^A\bmv_{A,0}$, 
$\vomega := \ls^A\vomega_{A,0}$ and 
$\vX := \ls^0\vX_A$.
Explicitly, $\vX \vv$ is given by
\begin{align}  \label{eq:Xvfinal}
    \vX \vv 
    = 
    \begin{bmatrix}
      \vR^T & -\vR^T \vo^\wedge \\
      0_{3 \times 3} & \vR^T
    \end{bmatrix}
    \begin{bmatrix}
      \bmv \\
      \vomega
    \end{bmatrix} 
    =
    \begin{bmatrix} 
      \vR^T \bmv - \vR^T \vo^\wedge \vomega \\
      \vR^T \vomega 
    \end{bmatrix}.
\end{align}
Define the vector valued functions $f_1 : \R^3 \times \SO(3) \rightarrow \R^3$ and $f_2 : \SO(3) \rightarrow \R^3$ as
\begin{align}
    \label{eq:f1}
    f_1(\vo,\vR) 
    &:
    = \vR^T \bmv - \vR^T \vo^\wedge \vomega \quad \text{and}
    \\
    \label{eq:f2}
    f_2(\vo, \vR) 
    &:=
    \vR^T \vomega ,
\end{align}
and rewrite \eqref{eq:Xvfinal} as
\begin{equation} \label{eq:Xv}
    \vX \vv 
    = 
    \vh(\vH)
    =
    f(\vo,\vR)
    =
    \begin{bmatrix}
      f_1(\vo,\vR) \\
      f_2(\vo,\vR)
    \end{bmatrix} ,
\end{equation}
where $\vh(\vH) : \SE(3) \rightarrow \R^6$ and $f(\vo,\vR) : \R^3 \times \SO(3) \rightarrow \R^6$. 
The left-trivialized derivative of $\vh(\vH)$ in a direction $\vDelta = \begin{bmatrix} \vDelta_v ; \vDelta_\omega \end{bmatrix} \in \R^6$ is 
\begin{equation} \label{eq:lefttrivderivh}
    \D \vh(\vH) \cdot T_I L_\vH \vDelta 
    =
    \D \vh(\vH) \cdot \vH \vDelta^\wedge .
\end{equation}
Since
\begin{equation}
    \vH \vDelta^\wedge 
    =
    \begin{bmatrix}
        \vR \vDelta_\omega^\wedge & \vR \vDelta_v   \\
        0_{1\times 3} & 0
    \end{bmatrix} ,
\end{equation}
we can rewrite \eqref{eq:lefttrivderivh} using \eqref{eq:f1} and \eqref{eq:f2} 
more specifically as
\begin{equation}
    \D \vh(\vH) \cdot T_I L_\vH \vDelta 
    =
    \D_1 f(\vo,\vR) \cdot \vR \vDelta_v
    + 
    \tilde{\D}_2 f(\vo,\vR) \cdot \vDelta_\omega .
\end{equation}
In matrix form, we express the left-trivialized derivative of $\vh(\vH)$ as
\begin{equation}
   \D \vh(\vH) \cdot T_I L_\vH
    = 
    \begin{bmatrix}
        \D_1 f_1(\vo,\vR) \vR 
        &
        \D_2 f_1(\vo,\vR) \cdot T_I L_\vR
        \\
        \D_1 f_2(\vo,\vR) \vR 
        &
        \D_2 f_2(\vo,\vR) \cdot T_I L_\vR
    \end{bmatrix} .
\end{equation}
From this follows immediately that
\begin{equation}
    \D_1 f_1(\vo,\vR) \vR 
    =
    \vR^T \vomega^\wedge \vR ,
\end{equation}
\begin{equation}
    \D_2 f_1(\vo,\vR) \cdot T_I L_\vR
    =
    \vR^T (\bmv - \vo^\wedge \vomega )^\wedge \vR ,
\end{equation}
\begin{equation}
    \D_1 f_2(\vo,\vR) \vR 
    =
    0_{3 \times 3} \quad \text{and}
\end{equation}
\begin{equation}
    \D_2 f_2(\vo,\vR) \cdot T_I L_\vR
    =
    \vR^T \vomega^\wedge \vR .
\end{equation}
This concludes the proof.
\end{proof}

\noindent
We are now ready to provide the algorithm  that computes the left-trivialized derivative of the extended inverse dynamics given by \eqref{eq:inversedynamics} with respect to the transformation matrix $\vH$. The algorithm is reported in Table~
\ref{tab:derivativeH}.

\begingroup
\renewcommand{\arraystretch}{1.2}
\begin{table}[H]
\caption{Left-trivialized derivatives of the extended inverse dynamics with respect to the transformation matrix $\vH$.}
\label{tab:derivativeH}
\centering
\begin{tabular}{|l|l|l|}
\hline
\multicolumn{3}{|l|}{\textbf{Inputs:} \textit{All outputs and intermediate variables of EIDAmb}}
   \\ \hline
\textbf{Line} & \textbf{Algorithm} & \textbf{Line in} \\
 & & \textbf{EIDAmb} \\\hline
1  & $\dfrac{\tp \vv_0}{\partial \vH} = \begin{bmatrix}
          \ls^0\vR_A \ls^A\vomega_{A,0}^\wedge \ls^0\vR_A^T 
          &
          \ls^0\vR_A (\ls^A\bmv_{A,0} - \ls^A\vo_0^\wedge \ls^A\vomega_{A,0} )^\wedge \ls^0\vR_A^T
          \\
          0_{3 \times 3} 
          &
          \ls^0\vR_A \ls^A\vomega_{A,0}^\wedge \ls^0\vR_A^T
        \end{bmatrix}$
   & 5
   \\[2ex]
2  & $\dfrac{\tp \va^r_0}{\partial \vH} = \begin{bmatrix}
          \ls^0\vR_A \ls^A\valpha_{grav}^\wedge \ls^0\vR_A^T 
          &
          \ls^0\vR_A (\ls^A\bma_{grav} - \ls^A\vo_0^\wedge \ls^A\valpha_{grav} )^\wedge \ls^0\vR_A^T
          \\
          0_{3 \times 3} 
          &
          \ls^0\vR_A \ls^A\valpha_{grav}^\wedge \ls^0\vR_A^T
        \end{bmatrix}$
   & 6
   \\[2ex]
3  & $\dfrac{\tp \vm_{\cB 0}}{\partial \vH} = 
     \bbM_{\cB 0} \dfrac{\tp \vv_0}{\partial \vH}$
   & 9
   \\[2ex]
4  & $\dfrac{\tp \vb^c_{\cB 0}}{\partial \vH} = 
     \bbM_{\cB 0} \dfrac{\tp \va^r_0}{\partial \vH} + 
     \dfrac{\tp \vv_0}{\partial \vH} \bts \vm_{\cB 0} + 
     \vv_0 \bts \dfrac{\tp \vm_{\cB 0}}{\partial \vH}$
   & 10
   \\[2ex]
5  & $\mathbf{for}\ i=1\ \mathbf{to}\ n_B\ \mathbf{do}$  
   & 12
   \\
6  & $\quad \dfrac{\tp \vv_i}{\partial \vH} =
     \ls^i\vX_{\lambda(i)} 
     \dfrac{\tp \vv_{\lambda(i)}}{\partial \vH}$
   & 16
   \\[2ex]
7  & $\quad \dfrac{\tp \va^r_i}{\partial \vH} =
     \ls^i\vX_{\lambda(i)} 
     \dfrac{\tp \va^r_{\lambda(i)}}{\partial \vH} + 
     \dfrac{\tp \vv_i}{\partial \vH} \times \vv_{\cJ i}$
   & 17
   \\[2ex]
8  & $\quad \dfrac{\tp \vm_{\cB i}}{\partial \vH} = 
     \bbM_{\cB i} \dfrac{\tp \vv_i}{\partial \vH}$
   & 20
   \\[2ex]
9  & $\quad \dfrac{\tp \vb^c_{\cB i}}{\partial \vH} = 
     \bbM_{\cB i} \dfrac{\tp \va^r_i}{\partial \vH} + 
     \dfrac{\tp \vv_i}{\partial \vH} \bts \vm_{\cB i} + 
     \vv_i \bts \dfrac{\tp \vm_{\cB i}}{\partial \vH}$
   & 21
   \\[2ex]
10 & $\mathbf{end}$ 
   & 23
   \\
11 & $\mathbf{for}\ i=n_B\ \mathbf{to}\ 1\ \mathbf{do}$  
   & 25
   \\
12 & $\quad \dfrac{\tp \vb^c_{\cB \lambda(i)}}{\partial \vH} =
     \dfrac{\tp \vb^c_{\cB \lambda(i)}}{\partial \vH} +
     \ls_{\lambda(i)}\vX^i 
     \dfrac{\tp \vb^c_{\cB i}}{\partial \vH}$
   & 26
   \\[2ex]
13 & $\mathbf{end}$ 
   & 28
   \\
14 & $\mathbf{for}\ i=1\ \mathbf{to}\ n_B\ \mathbf{do}$  
   & 29
   \\
15 & $\quad \dfrac{\tp \vtau_i}{\partial \vH} = 
     \vGamma_{\cJ i}^T \dfrac{\tp \vb^c_{\cB i}}{\partial \vH}$
   & 31
   \\[2ex]
16 & $\mathbf{end}$ 
   & 39
   \\
17 & $\dfrac{\tp \bar{\vtau}_b}{\partial \vH} = 
     \dfrac{\tp \vb^c_{\cB 0}}{\partial \vH}$
   & 40
   \\[2ex] \hline
\multicolumn{3}{|l|}{\textbf{Outputs:} $\D_1 \overline{ID} \circ \D L_H (I) = \tp \bar{\vtau} / \partial \vH = [\tp \vtau / \partial \vH ; \tp \bar{\vtau}_b / \partial \vH]$}
   \\ \hline
\end{tabular}
\end{table}
\endgroup

\noindent
\textit{Remarks:} 
\begin{itemize}
    \item Line 1 computes the left trivialized derivative $\tp ( \ls^0\vX_A \ls^A\vv_{A,0} ) / \partial \ls^A\vH_0$ in \eqref{eq:partialXvwrtH}.
    \item Line 2 computes the left trivialized derivative $\tp ( \ls^0\vX_A \ls^A\va_{grav} ) / \partial \ls^A\vH_0$ in \eqref{eq:partialXawrtH}.
    \item Line 17 computes the left-trivialized derivative of the non-physical control inputs with respect to the transformation matrix $\vH$ in \eqref{eq:ltlinverseH}.
\end{itemize}

\noindent\textbf{Second algorithm ($\vs$).} The second algorithm computes the derivatives of the extended inverse dynamics \eqref{eq:inversedynamics} with respect to the generalized position vector $\vs$. 
The right column shows which line in the EIDAmb each equation origins from. 
The function jcalcderiv(jtype(i),$\vs_i$) acts as in for the EIDAmb treated in the previous subsection as a look-up table, from where joint-type-specific functions are retrieved. 
These joint-type-specific functions compute the derivative of the velocity transformation matrix $\ls^i\vX_{\lambda(i|i}$ with respect to the generalized position $\vs_i$. 
The explicit joint-type-specific functions can be found in \cite[Appendix B.3]{bos2019} or can be straightforwardly derived using, e.g., \cite[Section 4.4]{featherstone2008rigid}. 

\begingroup
\renewcommand{\arraystretch}{1.2}
\begin{table}[H]
\caption{Derivatives of the extended inverse dynamics with respect to the generalized position vector $\vs$.}
\label{tab:derivatives}
\centering
\begin{tabular}{|p{0.8cm}|p{10cm}|p{1.6cm}|}
\hline
\multicolumn{3}{|l|}{\textbf{Inputs:} \textit{All outputs and intermediate variables of EIDAmb}}
\\ \hline
\textbf{Line} & \textbf{Algorithm} & \textbf{Line in} \\
 & & \textbf{EIDAmb} \\\hline
1  & $\mathbf{for}\ i=1\ \mathbf{to}\ n_B\ \mathbf{do}$
   & 12
   \\
2  & $\quad \dfrac{\partial \ls^i\vX_{\lambda(i)|i}}{\partial \vs_i} =$  jcalcderiv(jtype($i$)$, \vs_i$) 
   & 13
   \\[2ex]
3  & $\quad \dfrac{\partial \ls^i\vX_{\lambda(i)}}{\partial \vs_i} = 
     \dfrac{\partial \ls^i\vX_{\lambda(i)|i}}{\partial \vs_i}
     \ls^{\lambda(i)|i}\vX_{\lambda(i)}$
   & 15
   \\[2ex]
4  & $\quad \dfrac{\partial \vv_i}{\partial \vs} = \ls^i\vX_{\lambda(i)} 
     \dfrac{\partial \vv_{\lambda(i)}}{\partial \vs}$
   & 16
   \\[2ex]
5  & $\quad \dfrac{\partial \vv_i}{\partial \vs_i} = 
     \dfrac{\partial \vv_i}{\partial \vs_i} + \dfrac{\partial 
     \ls^i\vX_{\lambda(i)}}{\partial \vs_i} \vv_{\lambda(i)}$
   & 16
   \\[2ex]
6  & $\quad \dfrac{\partial \va^r_i}{\partial \vs} = \ls^i\vX_{\lambda(i)} 
     \dfrac{\partial \va^r_{\lambda(i)}}{\partial \vs} + 
     \dfrac{\partial \vv_i}{\partial \vs} \times \vv_{\cJ i}$
   & 17
   \\[2ex]
7  & $\quad \dfrac{\partial \va^r_i}{\partial \vs_i} = \dfrac{\partial
     \va^r_i}{\partial \vs_i} + 
     \dfrac{\partial \ls^i\vX_{\lambda(i)}}{\partial \vs_i} 
     \va^r_{\lambda(i)}$
   & 17
   \\[2ex]
8  & $\quad \dfrac{\partial \va^{vp}_i}{\partial \vs} = \ls^i\vX_{\lambda(i)} 
     \dfrac{\partial \va^{vp}_{\lambda(i)}}{\partial \vs} + 
     \dfrac{\partial \vv_i}{\partial \vs} \times \vv_{\cJ i}$
   & 18
   \\[2ex]
9  & $\quad \dfrac{\partial \va^{vp}_i}{\partial \vs_i} = \dfrac{\partial
     \va^{vp}_i}{\partial \vs_i} + 
     \dfrac{\partial \ls^i\vX_{\lambda(i)}}{\partial \vs_i} 
     \va^{vp}_{\lambda(i)}$
   & 18
   \\[2ex]
10 & $\quad \dfrac{\partial \vm_{\cB i}}{\partial \vs} = 
      \bbM_{\cB i} \dfrac{\partial \vv_i}{\partial \vs}$
   & 20
   \\[2ex]   
11 & $\quad \dfrac{\partial \vb^c_{\cB i}}{\partial \vs} = \bbM_{\cB i} 
     \dfrac{\partial \va^r_i}{\partial \vs} + 
     \dfrac{\partial \vv_i}{\partial \vs} \bts \vm_{\cB i} +
     \vv_i \bts \dfrac{\partial \vm_{\cB i}}{\partial \vs}$
   & 21
   \\[2ex]   
12 & $\quad \dfrac{\partial \vb^{vp}_{\cB i}}{\partial \vs} = \bbM_{\cB i} 
     \dfrac{\partial \va^{vp}_i}{\partial \vs} + 
     \dfrac{\partial \vv_i}{\partial \vs} \bts \vm_{\cB i} +
     \vv_i \bts \dfrac{\partial \vm_{\cB i}}{\partial \vs}$
   & 22
   \\[2ex]
13 & $\mathbf{end}$ 
   & 23
   \\
\hline
\multicolumn{3}{|c|}{\textit{continued on the next page.}}
   \\\hline
\end{tabular}
\end{table}
\endgroup

\begingroup
\renewcommand{\arraystretch}{1.2}
\begin{table}[H]
\centering
\begin{tabular}{|p{0.8cm}|p{10cm}|p{1.6cm}|}
\hline
\multicolumn{3}{|c|}{\textit{continued from the previous page.}}
   \\
\hline   
14 & $\mathbf{for}\ i=n_B\ \mathbf{to}\ 1\ \mathbf{do}$
   & 24
   \\
15 & $\quad \mathbf{for}\ k=1\ \mathbf{to}\ n_B\ \mathbf{do}$
   & 25
   \\ 
16 & $\qquad \dfrac{\partial \bbM^c_{\cB \lambda(i)}}{\partial \vs_k} = 
     \dfrac{\partial \bbM^c_{\cB \lambda(i)}}{\partial \vs_k} +
     \ls_{\lambda(i)}\vX^i \dfrac{\partial \bbM^c_{\cB i}}{\partial \vs_k}
     \ls^i\vX_{\lambda(i)}$
   & 25
   \\[2ex]
17 & $\quad \mathbf{end}$
   & 25
   \\
18 & $\quad \dfrac{\partial \bbM^c_{\cB \lambda(i)}}{\partial \vs_i} = 
     \dfrac{\partial \bbM^c_{\cB \lambda(i)}}{\partial \vs_i} +
     \dfrac{\partial \ls_{\lambda(i)}\vX^i}{\partial \vs_i} 
     \bbM^c_{\cB i} \ls^i\vX_{\lambda(i)} + \ls_{\lambda(i)}\vX^i \:
     \bbM^c_{\cB i} \dfrac{\partial \ls^i\vX_{\lambda(i)}}{\partial \vs_i}$
   & 25
   \\[2ex]
19 & $\quad \dfrac{\partial \vb^c_{\cB \lambda(i)}}{\partial \vs} =
     \dfrac{\partial \vb^c_{\cB \lambda(i)}}{\partial \vs} + 
     \ls_{\lambda(i)}\vX^i \dfrac{\partial \vb^c_{\cB i}}{\partial \vs}$
   & 26
   \\[2ex]
20 & $\quad \dfrac{\partial \vb^c_{\cB \lambda(i)}}{\partial \vs_i} =
     \dfrac{\partial \vb^c_{\cB \lambda(i)}}{\partial \vs_i} + 
     \dfrac{\partial \ls_{\lambda(i)}\vX^i}{\partial \vs_i} \vb^c_{\cB i}$
   & 26
   \\[2ex] 
21 & $\quad \dfrac{\partial \vb^{vp}_{\cB \lambda(i)}}{\partial \vs} =
     \dfrac{\partial \vb^{vp}_{\cB \lambda(i)}}{\partial \vs} + 
     \ls_{\lambda(i)}\vX^i \dfrac{\partial \vb^{vp}_{\cB i}}{\partial \vs}$
   & 27
   \\[2ex] 
22 & $\quad \dfrac{\partial \vb^{vp}_{\cB \lambda(i)}}{\partial \vs_i} =
     \dfrac{\partial \vb^{vp}_{\cB \lambda(i)}}{\partial \vs_i} + 
     \dfrac{\partial \ls_{\lambda(i)}\vX^i}{\partial \vs_i} \vb^{vp}_{\cB i}$
   & 27
   \\[2ex]
23 & $\mathbf{end}$ 
   & 28
   \\
24 & $\mathbf{for}\ k=1\ \mathbf{to}\ n_B\ \mathbf{do}$
   & -
   \\
25 & $\quad \dfrac{\partial \bbM^c_{\cB 0} \va_0}{\partial \vs_k} = 
     \dfrac{\partial \bbM^c_{\cB 0}}{\partial \vs_k} \va_0$
   & -
   \\[2ex]
26 & $\mathbf{end}$
   & -
   \\
27 & $\mathbf{for}\ i=1\ \mathbf{to}\ n_B\ \mathbf{do}$  
   & 29
   \\
28 & $\quad \dfrac{\partial \ls^i\va_{A,0}}{\partial \vs} = 
     \ls^i\vX_{\lambda(i)}
     \dfrac{\partial \ls^{\lambda(i)}\va_{A,0}}{\partial \vs}$
   & 30
   \\[2ex]
29 & $\quad \dfrac{\partial \ls^i\va_{A,0}}{\partial \vs_i} = 
     \dfrac{\partial \ls^i\va_{A,0}}{\partial \vs_i} +
     \dfrac{\partial \ls^i\vX_{\lambda(i)}}{\partial \vs_i} 
     \ls^{\lambda(i)}\va_{A,0}$
   & 30
   \\[2ex]
30 & $\quad \mathbf{for}\ k=1\ \mathbf{to}\ n_B\ \mathbf{do}$
   & -
   \\
31 & $\qquad \dfrac{\partial \bbM^c_{\cB i} \ls^i\va_{A,0}}{\partial \vs_k} =
     \dfrac{\partial \bbM^c_{\cB i}}{\partial \vs_k} \ls^i\va_{A,0}$
   & -
   \\[2ex]
32 & $\qquad \dfrac{\partial \ls_i\vF_{\cB i}}{\partial \vs_k} = 
     \dfrac{\partial \bbM^c_{\cB i}}{\partial \vs_k} \vGamma_{\cJ i}$
   & 32
   \\[2ex]
33 & $\quad \mathbf{end}$
   & -
   \\
34 & $\quad \dfrac{\partial \vtau_i}{\partial \vs} = 
     \vGamma_{\cJ i}^T
     \Big( \dfrac{\partial \bbM^c_{\cB i}}{\partial \vs}
     \ls^i\va_{A,0} + 
     \bbM^c_{\cB i} \dfrac{\partial \ls^i\va_{A,0}}{\partial \vs} +
     \dfrac{\partial \vb^c_{\cB i}}{\partial \vs} \Big)$
   & 32
   \\[2ex]
35 & $\quad j = i$
   & 33
   \\
36 & $\quad \mathbf{while}\ \lambda(j) > 0$
   & 34
   \\
37 & $\qquad \mathbf{for}\ k=1\ \mathbf{to}\ n_B\ \mathbf{do}$
   & 35
   \\
38 & $\quad \qquad \dfrac{\partial \ls_{\lambda(j)}\vF_{\cB i}}{\partial \vs_k} = 
     \ls_{\lambda(j)}\vX^j \dfrac{\partial \ls_j\vF_{\cB i}}{\partial \vs_k}$
   & 35
   \\[2ex]
39 & $\qquad \mathbf{end}$
   & 35
   \\
\hline
\multicolumn{3}{|c|}{\textit{continued on the next page.}}
   \\\hline
\end{tabular}
\end{table}
\endgroup

\begingroup
\renewcommand{\arraystretch}{1.2}
\begin{table}[H]
\centering
\begin{tabular}{|p{0.8cm}|p{10cm}|p{1.6cm}|}
\hline
\multicolumn{3}{|c|}{\textit{continued from the previous page.}}
   \\\hline
40 & $\qquad \dfrac{\partial \ls_{\lambda(j)}\vF_{\cB i}}{\partial \vs_j} = 
      \dfrac{\partial \ls_{\lambda(j)}\vF_{\cB i}}{\partial \vs_j} +
     \dfrac{\partial \ls_{\lambda(j)}\vX^j}{\partial \vs_j} \ls_j\vF_{\cB i}$
   & 35
   \\[2ex]
41 & $\qquad j = \lambda(j)$
   & 36
   \\
42 & $\quad \mathbf{end}$
   & 37
   \\
43 & $\quad \mathbf{for}\ k=1\ \mathbf{to}\ n_B\ \mathbf{do}$
   & 38
   \\
44 & $\qquad \dfrac{\partial \ls_0\vF_{\cB i}}{\partial \vs_k} = 
     \ls_0\vX^j \dfrac{\partial \ls_j\vF_{\cB i}}{\partial \vs_k}$
   & 38
   \\[2ex]
45 & $\quad \mathbf{end}$
   & 38
   \\   
46 & $\quad \dfrac{\partial \ls_0\vF_{\cB i}}{\partial \vs_j} = 
      \dfrac{\partial \ls_0\vF_{\cB i}}{\partial \vs_j} +
     \dfrac{\partial \ls_0\vX^j}{\partial \vs_j} \ls_j\vF_{\cB i}$
   & 38
   \\[2ex]
47 & $\quad \dfrac{\partial \ls_0\vF \dot{\vr}}{\partial \vs} = 
      \dfrac{\partial \ls_0\vF \dot{\vr}}{\partial \vs} + 
      \dfrac{\partial \ls_0\vF_{\cB i} }{\partial \vs} \dot{\vr}$
   & -
   \\[2ex]
48 & $\mathbf{end}$ 
   & 39
   \\ 
49 & $\dfrac{\partial \bar{\vtau}_b}{\partial \vs} = 
     \dfrac{\partial \bbM^c_{\cB 0} \va_0 }{\partial \vs} +
     \dfrac{\partial \ls_0\vF \dot{\vr}}{\partial \vs} +
     \dfrac{\partial \vb^{vp}_{\cB 0}}{\partial \vs}$
   & 40
   \\[2ex] \hline
\multicolumn{3}{|l|}{\textbf{Outputs:} $\D_2 \overline{ID} = \partial \bar{\vtau} / \partial \vs = [ \partial \bar{\vtau}_b / \partial \vs ; \partial \vtau / \partial \vs ]$}
   \\ \hline
\end{tabular}
\end{table}
\endgroup

\noindent
\textit{Remark:} Line 49 computes the derivative of the non-physical control input with respect to the generalized position vector $\vs$ in \eqref{eq:ltlinverses}.

~\\
\noindent\textbf{Third algorithm ($\vv$).} The third algorithm computes the derivatives of the extended inverse dynamics \eqref{eq:inversedynamics} with respect to the moving-base velocity $\vv$. 
The right column shows which line in the EIDAmb each equation origins from. 

\begingroup
\renewcommand{\arraystretch}{1.2}
\begin{table}[H]
\caption{Derivatives of the extended inverse dynamics with respect to the moving-base velocity $\vv$.}
\label{tab:derivativev}
\centering
\begin{tabular}{|p{0.8cm}|p{7cm}|p{1.6cm}|}
\hline
\multicolumn{3}{|l|}{\textbf{Inputs:} \textit{All outputs and intermediate variables of EIDAmb}}
\\ \hline
\textbf{Line} & \textbf{Algorithm} & \textbf{Line in} \\
 & & \textbf{EIDAmb} \\\hline
1  & $\dfrac{\partial \vv_0}{\partial \vv} = I_6$
   & 5
   \\[2ex]
2  & $\dfrac{\partial \vm_{\cB 0}}{\partial \vv} = \bbM_{\cB 0}$
   & 9
   \\[2ex]
3  & $\dfrac{\partial \vb^c_{\cB 0}}{\partial \vv} = 
     \dfrac{\partial \vv_0}{\partial \vv} \bts 
     \vm_{\cB 0} + \vv_0 \bts \dfrac{\partial \vm_{\cB 0}}{\partial \vv}$
   & 10
   \\[2ex]
4  & $\mathbf{for}\ i=1\ \mathbf{to}\ n_B\ \mathbf{do}$  
   & 12
   \\
5  & $\quad \dfrac{\partial \vv_i}{\partial \vv} = \ls^i\vX_{\lambda(i)} 
     \dfrac{\partial \vv_{\lambda(i)}}{\partial \vv}$
   & 16
   \\[2ex]
6  & $\quad \dfrac{\partial \va^r_i}{\partial \vv} = 
     \ls^i\vX_{\lambda(i)} 
     \dfrac{\partial \va^r_{\lambda(i)}}{\partial \vv} + 
     \dfrac{\partial \vv_i}{\partial \vv} \times \vv_{\cJ i}$
   & 17
   \\[2ex]    
\hline
\multicolumn{3}{|c|}{\textit{continued on the next page.}}
   \\\hline
\end{tabular}
\end{table}
\endgroup

\begingroup
\renewcommand{\arraystretch}{1.2}
\begin{table}[H]
\centering
\begin{tabular}{|p{0.8cm}|p{7cm}|p{1.6cm}|}
\hline
\multicolumn{3}{|c|}{\textit{continued from the previous page.}}
   \\
\hline
7  & $\quad \dfrac{\partial \vm_{\cB i}}{\partial \vv} = 
      \bbM_{\cB i} \dfrac{\partial \vv_i}{\partial \vv}$
   & 20
   \\[2ex] 
8  & $\quad \dfrac{\partial \vb^c_{\cB i}}{\partial \vv} = \bbM_{\cB i} 
     \dfrac{\partial \va^r_i}{\partial \vv} + 
     \dfrac{\partial \vv_i}{\partial \vv} \bts \vm_{\cB i} +
     \vv_i \bts \dfrac{\partial \vm_{\cB i}}{\partial \vv}$
   & 21
   \\[2ex] 
9  & $\mathbf{end}$ 
   & 23
   \\ 
10 & $\mathbf{for}\ i=n_B\ \mathbf{to}\ 1\ \mathbf{do}$
   & 24
   \\
11 & $\quad \dfrac{\partial \vb^c_{\cB \lambda(i)}}{\partial \vv} =
     \dfrac{\partial \vb^c_{\cB \lambda(i)}}{\partial \vv} + 
     \ls_{\lambda(i)}\vX^i \dfrac{\partial \vb^c_{\cB i}}{\partial \vv}$
   & 26
   \\[2ex]
12 & $\mathbf{end}$ 
   & 28
   \\
13 & $\mathbf{for}\ i=1\ \mathbf{to}\ n_B\ \mathbf{do}$  
   & 29
   \\
14 & $\quad \dfrac{\partial \vtau_i}{\partial \vv} = 
     \vGamma_{\cJ i}^T \dfrac{\partial \vb^c_{\cB i}}{\partial \vv}$
   & 31
   \\[2ex]
15 & $\mathbf{end}$ 
   & 39
   \\ 
16 & $\dfrac{\partial \bar{\vtau}_b}{\partial \vv} = 
     \dfrac{\partial \vb^c_{\cB 0}}{\partial \vv}$
   & 40
   \\[2ex] \hline
\multicolumn{3}{|l|}{\textbf{Outputs:} $\D_3 \overline{ID} = \partial \bar{\vtau} / \partial \vv = [ \partial \bar{\vtau}_b / \partial \vv ; \partial \vtau / \partial \vv ]$}
   \\ \hline
\end{tabular}
\end{table}
\endgroup

\noindent
\textit{Remark:} Line 16 computes the derivative of the non-physical control input with respect to the moving-base velocity $\vv$ in \eqref{eq:ltlinversev}.

~\\
\noindent\textbf{Fourth algorithm ($\vr$).} The fourth algorithm computes the derivatives of the extended inverse dynamics \eqref{eq:inversedynamics} with respect to the generalized velocity vector $\vr$. The right column shows which line in the EIDAmb each equation origins from. 

\begingroup
\renewcommand{\arraystretch}{1.2}
\begin{table}[H]
\caption{Derivatives of the extended inverse dynamics w.r.t. the generalized velocity vector $\vr$.}
\label{tab:derivativer}
\centering
\begin{tabular}{|p{0.8cm}|p{7cm}|p{1.6cm}|}
\hline
\multicolumn{3}{|l|}{\textbf{Inputs:} \textit{All outputs and intermediate variables of EIDAmb}}
\\ \hline
\textbf{Line} & \textbf{Algorithm} & \textbf{Line in} \\
 & & \textbf{EIDAmb} \\\hline
1  & $\mathbf{for}\ i=1\ \mathbf{to}\ n_B\ \mathbf{do}$  
   & 12
   \\
2  & $\quad \dfrac{\partial \vv_{\cJ i}}{\partial \vr_i} = \vGamma_{\cJ i}$
   & 14
   \\[2ex]
3  & $\quad \dfrac{\partial \vv_i}{\partial \vr} = \ls^i\vX_{\lambda(i)} 
     \dfrac{\partial \vv_{\lambda(i)}}{\partial \vr}$
   & 16
   \\[2ex]
4  & $\quad \dfrac{\partial \vv_i}{\partial \vr_i} = 
     \dfrac{\partial \vv_i}{\partial \vr_i} + 
     \dfrac{\partial \vv_{\cJ i}}{\partial \vr_i}$
   & 16
   \\[2ex]
5  & $\quad \dfrac{\partial \va^r_i}{\partial \vr} = \ls^i\vX_{\lambda(i)} 
     \dfrac{\partial \va^r_{\lambda(i)}}{\partial \vr} + 
     \dfrac{\partial \vv_i}{\partial \vr} \times \vv_{\cJ i}$
   & 17
   \\[2ex] 
6  & $\quad \dfrac{\partial \va^r_i}{\partial \vr_i} = \dfrac{\partial
     \va^r_i}{\partial \vr_i} + \vv_i \times 
     \dfrac{\partial \vv_{\cJ i}}{\partial \vr_i}$
   & 17
   \\[2ex]    
\hline
\multicolumn{3}{|c|}{\textit{continued on the next page.}}
   \\\hline
\end{tabular}
\end{table}
\endgroup

\begingroup
\renewcommand{\arraystretch}{1.2}
\begin{table}[H]
\centering
\begin{tabular}{|p{0.8cm}|p{7cm}|p{1.6cm}|}
\hline
\multicolumn{3}{|c|}{\textit{continued from the previous page.}}
   \\
\hline
7  & $\quad \dfrac{\partial \vm_{\cB i}}{\partial \vr} = 
      \bbM_{\cB i} \dfrac{\partial \vv_i}{\partial \vr}$
   & 20
   \\[2ex] 
8  & $\quad \dfrac{\partial \vb^c_{\cB i}}{\partial \vr} = \bbM_{\cB i} 
     \dfrac{\partial \va^r_i}{\partial \vr} + 
     \dfrac{\partial \vv_i}{\partial \vr} \bts \vm_{\cB i} +
     \vv_i \bts \dfrac{\partial \vm_{\cB i}}{\partial \vr}$
   & 21
   \\[2ex]
9  & $\mathbf{end}$ 
   & 23
   \\
10 & $\mathbf{for}\ i=n_B\ \mathbf{to}\ 1\ \mathbf{do}$
   & 24
   \\
11 & $\quad \dfrac{\partial \vb^c_{\cB \lambda(i)}}{\partial \vr} =
     \dfrac{\partial \vb^c_{\cB \lambda(i)}}{\partial \vr} + 
     \ls_{\lambda(i)}\vX^i \dfrac{\partial \vb^c_{\cB i}}{\partial \vr}$
   & 26
   \\[2ex] 
12 & $\mathbf{end}$ 
   & 28
   \\
13 & $\mathbf{for}\ i=1\ \mathbf{to}\ n_B\ \mathbf{do}$  
   & 29
   \\
14 & $\quad \dfrac{\partial \vtau_i}{\partial \vr} = 
     \vGamma_{\cJ i}^T \dfrac{\partial \vb^c_{\cB i}}{\partial \vr}$
   & 31
   \\[2ex]
15 & $\mathbf{end}$ 
   & 39
   \\ 
16 & $\dfrac{\partial \bar{\vtau}_b}{\partial \vr} = 
     \dfrac{\partial \vb^c_{\cB 0}}{\partial \vr}$
   & 40
   \\[2ex]
   \hline
\multicolumn{3}{|l|}{\textbf{Outputs:} $\D_4 \overline{ID} = \partial \bar{\vtau} / \partial \vr = [ \partial \bar{\vtau}_b / \partial \vr ; \partial \vtau / \partial \vr ]$} 
\\ \hline
\end{tabular}
\end{table}
\endgroup

\noindent
\textit{Remark:} Line 16 computes the derivative of the non-physical control input with respect to the generalized velocity vector $\vr$ in \eqref{eq:ltlinverser}.

\subsection{Inverse mass matrix algorithm for moving-base systems}
\label{sec:IMMAmb}

In \cite{carpentier2018analyticalinverse}, a computational efficient algorithm --we will refer to this algorithm, in the following, as the Inverse Mass Matrix Algorithm (IMMA)-- is introduced to compute the inverse of the mass matrix for fixed-base systems. The IMMA is based on the Articulated Body Algorithm (ABA) which computes the joint accelerations as a function of the joint configuration, velocity, and applied torque. In this section, we propose an extended version of the IMMA that computes the mass matrix for moving-base systems. We dub this algorithm the Inverse Mass Matrix Algorithm for moving-base systems (IMMAmb). 

The forward dynamics for moving-base systems given in \eqref{eq:forwarddynamics} can be rewritten as
\begin{equation}
    FD(\vH,\vs,\vv,\vr,\vtau) 
    :=
    \begin{bmatrix} 
        \dot{\vv} \\ 
        \dot{\vr} 
    \end{bmatrix} 
    =
    \begin{bmatrix} 
        \vM_{11} & \vM_{12} \\ 
        \vM_{21} & \vM_{22} 
    \end{bmatrix}^{-1} 
    \begin{bmatrix} 
        - \vh_1 \\ 
        \vtau - \vh_2 
    \end{bmatrix} .
\end{equation}
The key idea of the IMMAmb is to evaluate the forward dynamics for $n_J$ appropriates values of the torque vector $\vtau$ (namely, the $n_J$ basis vectors $[1, 0, \dots, 0]$, $[0, 1, 0, \dots, 0]$, $\dots$, and $[0, \dots, 0, 1]$) while discarding the computation of the Coriolis and gravity terms $\vh_1$ and $\vh_2$. While this could be done in a for loop, calling the forward dynamics several times, the key idea we use (taken from IMMA) is that such an evaluation can be done in parallel so to evaluate intermediate computations only once. The result of this evaluation is the tall matrix corresponding to the blocks $\vM^{-1}_{12}$ and $\vM^{-1}_{22}$ of the inverse mass matrix $\vM^-1$ (more precisely, the IMMAmb only computes the upper diagonal part of the matrix $\vM^{-1}_{22}$). The block $\vM^{-1}_{21}$ can be computed by symmetry from  $\vM^{-1}_{12}$ and $\vM^{-1}_{11}$ with a $6\times 6$ symmetric matrix inversion.
The whole IMMAmb algorithm
is reported in Table~\ref{tab:IMMAmb}.
In the algorithm, the subtree of $i$ is defined as all bodies that are supported by joint $i$, including body $i$ itself, and $subtree(i)+6$ indicates an addition of $6$ to all elements of the set $subtree(i)$. For a given matrix $\vM$, we use Python programming language notation $\vM[i,i\!:]$ to express all columns of $\vM[i,:]$ from column $i$ up to and including the last column. Similarly, with $\vM[:,s]$, where $s$ a set of indexes, will denote the matrix resulting from the selection of the columns of $\vM$ included in $s$.

\begingroup
\renewcommand{\arraystretch}{1.2}
\begin{table}[H]
\caption{Inverse Mass Matrix Algorithm for moving-base systems}
\label{tab:IMMAmb}
\centering
\begin{tabular}{|l|l|}
\hline
\textbf{Inputs} & model, $\vs$  \\ \hline
\textbf{Line} & \textbf{IMMAmb}  \\ \hline
1  & $\mathbf{for}\ i=1\ \mathbf{to}\ n_B\ \mathbf{do}$  
   \\
2  & $\quad [ \ls^i\vX_{\lambda(i)|i}, \vGamma_{\cJ i} ] = $
     jcalc(jtype($i$)$, \vs_i$) 
   \\
3  & $\quad \ls^i\vX_{\lambda(i)} = \ls^i\vX_{\lambda(i)|i} 
     \ls^{\lambda(i)|i}\vX_{\lambda(i)}$ 
   \\
4  & $\quad \bbM_{\cB i}^A = \bbM_{\cB i}$
   \\
5  & $\mathbf{end}$
   \\
6  & $\mathbf{for}\ i=n_B\ \mathbf{to}\ 1\ \mathbf{do}$  
   \\
7  & $\quad \vU_{\cB i} = \bbM_{\cB i}^A \vGamma_{\cJ i}$
   \\
8  & $\quad \vD_{\cB i} = \vGamma_{\cJ i}^T \vU_{\cB i}$
   \\
9  & $\quad \vM^{inv}[i\!+\!6,i\!+\!6] = \vD_{\cB i}^{-1}$
   \\
10 & $\quad \vM^{inv}[i\!+\!6,subtree(i)\!+\!6] =
     \vM^{inv}[i\!+\!6,subtree(i)\!+\!6]$
   \\
   & $\phantom{\quad \vM^{inv}[i\!+\!6,subtree(i)\!+\!6] =} 
     - \vD_{\cB i}^{-1} \vGamma_{\cJ i}^T
     \mathcal{F}_i[:,subtree(i)\!+\!6]$
   \\
11 & $\quad \mathcal{F}_{\lambda(i)}[:,subtree(i)\!+\!6] =
     \mathcal{F}_{\lambda(i)}[:,subtree(i)\!+\!6]$
   \\
   & $\phantom{\quad \mathcal{F}_{\lambda(i)}[:,subtree(i)\!+\!6] =} 
     + \ls_{\lambda(i)}\vX^i
     \big( \mathcal{F}_i[:,subtree(i)\!+\!6]$
   \\
   & $\phantom{\quad \mathcal{F}_{\lambda(i)}[:,subtree(i)\!+\!6] =} 
     + \vU_{\cB i} \vM^{inv}[i\!+\!6,subtree(i)\!+\!6] \big)$
   \\
12 & $\quad \bbM^a_{\cB i} = \bbM^A_{\cB i} - \vU_{\cB i} 
     \vD_{\cB i}^{-1} \vU_{\cB i}^T$
   \\
13 & $\quad \bbM^A_{\cB \lambda(i)} = \bbM^A_{\cB \lambda(i)}  +
     \ls_{\lambda(i)}\vX^i \bbM^a_{\cB i} 
     \ls^i\vX_{\lambda(i)}$
   \\
14 & $\mathbf{end}$ 
   \\ 
15 & $\mathcal{P}_0[:,7\!:] = - (\bbM^A_{\cB 0})^{-1} \mathcal{F}_0[:,7\!:]$
   \\
16 & $\mathbf{for}\ i=1\ \mathbf{to}\ n_B\ \mathbf{do}$  
   \\
17 & $\quad \vM^{inv}[i\!+\!6,i\!+\!6\!:] = \vM^{inv}[i\!+\!6,i\!+\!6\!:] - 
     \vD_{\cB i}^{-1} \vU_{\cB i}^T \ls^i\vX_{\lambda(i)}
     \mathcal{P}_{\lambda(i)}[:,i\!+\!6\!:]$
   \\
18 & $\quad \mathcal{P}_i[:,i\!+\!6\!:] = \vGamma_{\cJ i} \vM^{inv}[i\!+\!6,i\!+\!6\!:] 
     + \ls^i\vX_{\lambda(i)} \mathcal{P}_{\lambda(i)}[i:i\!+\!6\!:]$
   \\
19 & $\mathbf{end}$ 
   \\   
20 & $\vM^{inv}[1\!:\!6,7\!:] = \mathcal{P}_0[:,7\!:]$
   \\
21 & $\mathbf{for}\ i=1\ \mathbf{to}\ n_B\ \mathbf{do}$  
   \\
22 & $\quad \mathbf{for}\ j=i\ \mathbf{to}\ n_B\ \mathbf{do}$  
   \\
23 & $\qquad \vM^{inv}[j\!+\!6,i\!+\!6] = \vM^{inv}[i\!+\!6,j\!+\!6]$
   \\
24 & $\quad \mathbf{end}$ 
   \\
25 & $\mathbf{end}$ 
   \\ 
26 & $\vM^{inv}[1\!:\!6,1\!:\!6] = (\bbM^A_{\cB 0})^{-1}$
   \\ \hline   
\textbf{Outputs} & $\vM^{inv}$  \\ \hline
\end{tabular}
\end{table}
\endgroup

\section{Numerical Validation}
\label{chp:numericalverification}
In this section, we provide numerical evidence 
of the correctness of the mathematical expressions presented in Proposition \ref{proposition:inverserelation} and the corresponding new recursive algorithms presented in Section \ref{chp:algorithmicaspects}. Such an evidence is provided
comparing the left-trivialized geometric linearization 
obtained via these algorithms with the same linearization
obtained by brute-force (geometric) finite difference.

A representative moving-base multibody system has been created, consisting of a multibody system with tree-topology and prismatic and revolute joints. By construction, the whole framework must provide correct results for all moving-base multibody systems as long as they satisfy Assumption \ref{assumption:joints}, put forward in the introduction, that all joints are conventional and modeled as 1-DoF joints.
Three criteria to build the test system have been selected:
\begin{enumerate}
    \item The test system must contain each type of 1-DoF joint (revolute, prismatic, helical), with at least one for spatial directions ($x,y,z$).
    \item The test system must contain at least one branch, since the algorithms contain several lines of code that only apply to branched systems. 
    \item The test system must contain at least one branch directly at the base (so body $0$ must have multiple children), since the algorithms contains a few lines of code that only apply to branches at the base.
\end{enumerate}
The resulting test system, for which a 3D visualization is provided in Figure~\ref{fig:model}, has been thus chosen to have nine joints, so to have three joint types oriented in three possible directions.
Furthermore, the system has a kinematic branch directly at the base (as well as further branches on child links). 
A 3D visualization that illustrates our designed model is depicted in Figure~\ref{fig:model}(a), where the yellow ball represents the \emph{moving base}, the red bodies are preceded by \emph{prismatic joint}, the green bodies by a \emph{revolute joint}, and the blue bodies by a \emph{helical joint}. 
Figure~\ref{fig:model}(b) shows the system in a state where all joints positions are equal to $0$, and the right image shows the system in a state where all joint positions are equal to $0.3$, to visualize the movement of the joints.

\begin{figure}[ht]
  \centering
  \begin{tabular}{cc}
  \includegraphics[scale=0.3]{./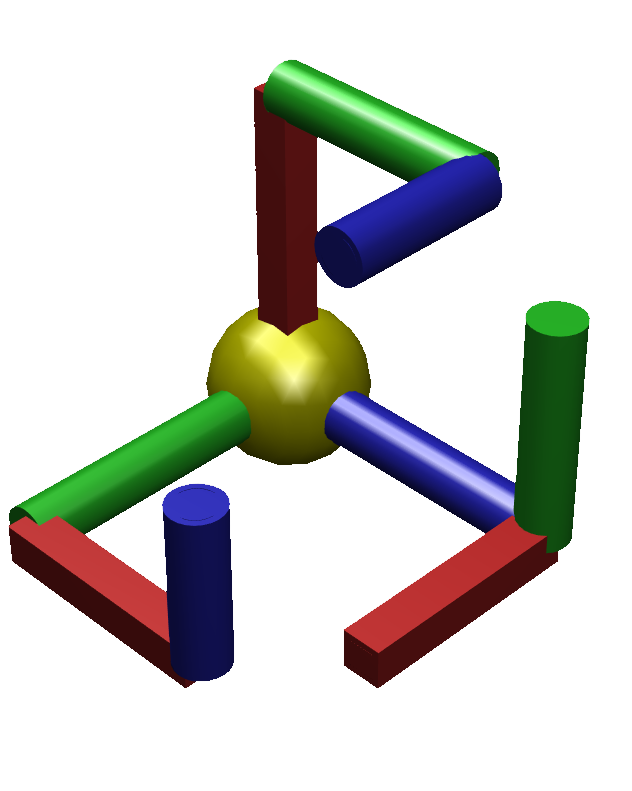} 
  &
  \includegraphics[scale=0.295]{./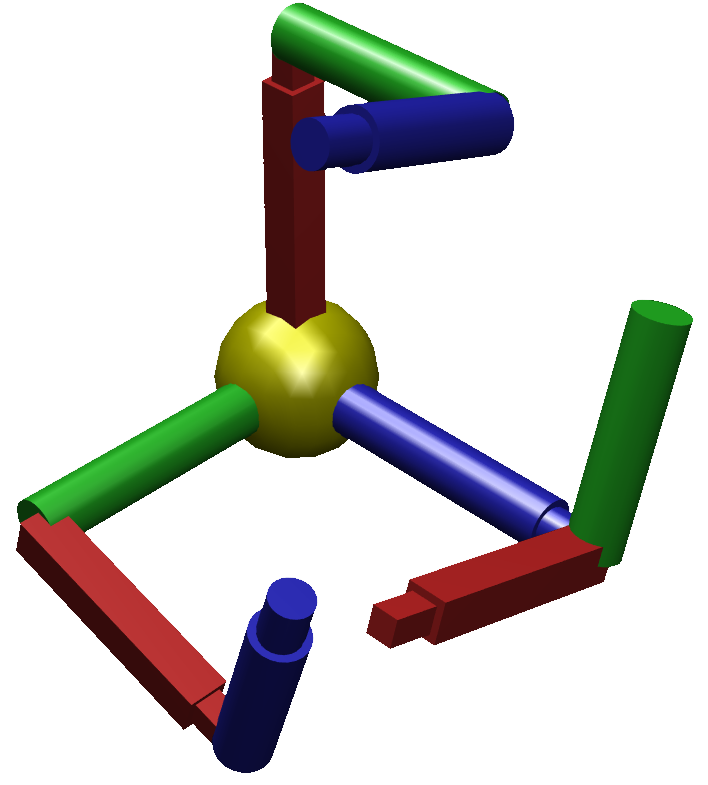}
  \\
  (a) & (b)
  \end{tabular}
  \caption{The moving-base multibody system topology selected for numerical validation; (a) reference configuration, showing a branched system with a moving base (yellow) with revolute (green), prismatic (red), and helical (blue) joints; (b) perturbed configuration, where all joint position are equal to $0.3$ (representing meters or radians, depending on the joint type).}
  \label{fig:model}
\end{figure}

\noindent
We performed a randomized test using the test model. A total of 100 configurations $(\ls^A\vH_0, \vs )$, velocities $(\ls^A\vv_{A,0}, \vr)$,
and joint accelerations $\dot{\vr}$ are
randomly selected (uniform distributions over the interval $[0,1]$, employing exponential map for generating orientation) and the corresponding moving base acceleration $\ls^0\dot{\vv}_{A,0}$ and joint torque $\vtau$ are computed through the RNEAmb for consistency (recall Definition~\ref{def:consistent}). 

Furthermore, the following model parameters are also randomly selected: the velocity transformation matrices $\ls^{\lambda(i)|i}\vX_{\lambda(i)}$, which are body fixed constants, and the rigid body inertias\footnote{Random variable distribution was again uniform in the interval $[0,1]$. The inertia matrices is enforced symmetric and only positive definite matrices are considered.} $\bbM_{\cB i}$ 

We computed numerically the geometric linearization
in Proposition~\ref{proposition:linearization} 
by making use of (geometric) finite difference. 
Namely, we approximate 
$[\D_1 FD(\vH,\vs,\vv,\vr,\vtau)\circ \D L_H (I) ]_i$,
$[\D_2 FD(\vH,\vs,\vv,\vr,\vtau)]_k$,
$[\D_3 FD(\vH,\vs,\vv,\vr,\vtau)]_i$,
and 
$[\D_4 FD(\vH,\vs,\vv,\vr,\vtau)]_k$,
with $[X]_l$ denoting the $l$-th column of matrix $X$
and $i \in \{1 ... 6\}$ and $k \in \{1 ... n_J\}$ 
as 
\begin{align}
  & 
  \frac1\delta\left(FD(\vH\exp( (\Delta^{base}_i)^\wedge ),\vs,\vv,\vr,\vtau) - FD(\vH,\vs,\vv,\vr,\vtau)\right), \\
  & 
  \frac1\delta\left(FD(\vH,\vs+\Delta^{joint}_k,\vv,\vr,\vtau) - FD(\vH,\vs,\vv,\vr,\vtau)\right), \\
  & 
  \frac1\delta\left(FD(\vH,\vs,\vv+\Delta^{base}_i,\vr,\vtau) - FD(\vH,\vs,\vv,\vr,\vtau)\right), \\
  & 
  \frac1\delta\left(FD(\vH,\vs,\vv,\vr+\Delta^{joint}_k,\vtau) - FD(\vH,\vs,\vv,\vr,\vtau)\right),
\end{align}
where $\Delta^{base}_i \in \R^6$ is the perturbation vector for the moving base and $\Delta^{joint}_k \in \R^{n_J}$ for the joints.
The perturbation vectors $\Delta^{base}_i$ and $\Delta^{joint}_k$ are defined as zero-vectors, except the $i$-th or $k$-th index, which is equal to the perturbation scalar $\delta \in \R$ (for example, $\Delta^{base}_2 := [0; \delta ;0;0;0;0]$). We have employed the constant value  $\delta = 10^{-6}$. Adaptive finite difference perturbation or central finite difference could have been use to get even more accurate results, but as the main result was to get a confirmation that the correctness of the expressions and their algorithm implementation, this was deemed sufficient.

As numerical validation criteria, we define a maximum error and an average error between the left-trivialized linearization obtained via the approach described in Section~\ref{chp:algorithmicaspects} and the finite differences as (${idx} \in \{1,2,3,4\}$)
\begin{align} 
    e^{max}_{idx} 
    & :=
    max_{e \in E}\left|\frac{\D_{idx}FD^{Alg}(e) - \D_{idx}FD^{FinDiff}(e)}{avg|\D_{idx}FD^{Alg}|}\right| , 
    \label{eq:errormax} \\
    e^{avg}_{idx} 
    & :=
    avg_{e \in E}\left|\frac{\D_{idx}FD^{Alg}(e) - \D_{idx}FD^{FinDiff}(e)}{avg|\D_{idx}FD^{Alg}|}\right| . 
    \label{eq:erroravg}
\end{align}
where $E$ denotes the set of $100$ randomly selected combinations of configurations, geometric, and inertial properties, and $avg|\D_{idx} {FD}^{Alg}| = avg_{e \in E}|\D_{idx}{FD}^{Alg}(e)|$.

The following values where obtained from 
the set of 100 randomized combinations:
\begin{align}
    e^{max}_1 & = 4.1023\cdot10^{-5} ,
    &
    e^{avg}_1 & = 2.3560\cdot10^{-6} ,
\\
    e_{2}^{max} & = 4.6853\cdot10^{-3} ,
    & 
    e_{2}^{avg} & = 1.3604\cdot10^{-4} , 
\\
    e_{3}^{max} & = 1.8230\cdot10^{-5} ,
    & 
    e_{3}^{avg} & = 1.3021\cdot10^{-6} ,
\\
    e_{4}^{max} & = 1.5693\cdot10^{-4} ,
    & 
    e_{4}^{avg} & = 1.3766\cdot10^{-6} .
\end{align}
These results provide numerical evidence that our proposed method of computing the left-trivialized geometric linearization and its MATLAB implementation are correct. To provide further evidence, we report a parameter study showing how the mean of $e^{max}_{idx}$ and $e^{avg}_{idx}$, $idx \in \{1,2,3,4\}$, vary as a function of delta in Figure~\ref{fig:parametric_study}. The plot illustrates that as expected the error of the finite difference approximation is reduced as $\delta$ is reduced, up to the point where round off errors become dominant.

\begin{figure}[h!]
  \centering
  \includegraphics[scale=0.7]{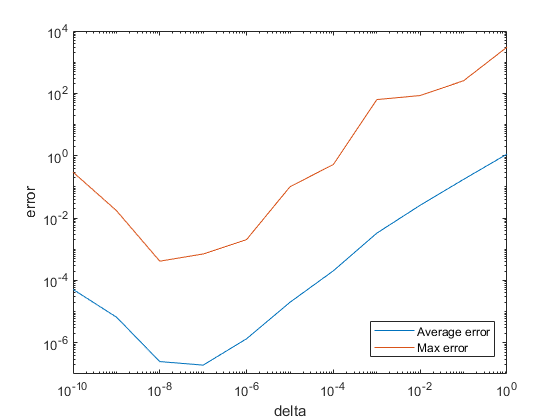}
  \caption{Parametric study with respect to $\delta$ of the  errors illustrating how the maximal and average error of the finite difference approximation is reducing as $\delta$ is reduced, up to the point where round off errors become dominant.}
  \label{fig:parametric_study}
\end{figure}

\section{Conclusions}
\label{chp:conclusions}
In this work, we have shown how to define a notion of mathematically elegant and singularity-free geometric linearization for moving-base robotic systems. Given this new formulation, we have also presented an algorithm to compute the exact linearization quantities in a computationally efficient manner. The algorithm has been derived by combining the proposed geometric linearization with modifications of the recursive dynamics algorithms~\cite{featherstone2008rigid}, widely used in robotics and computer graphics.

The algorithm results have been validated by comparing them against geometric finite differences, showing that the results are comparable, up to expected numerical error induced by finite differencing.

Future works include the implementation of the proposed linearization algorithms in a high performance language such as, e.g., C++ for real-time applications and its use for control, estimation, and optimal control planning of humanoid and quadruped robots.










\bibliographystyle{plain}
\bibliography{bibl} 

\begin{thebibliography}{10}

\bibitem{anderson2002analytical}
K.S. Anderson and Y.~Hsu.
\newblock Analytical fully-recursive sensitivity analysis for multibody dynamic
  chain systems.
\newblock {\em Multibody System Dynamics}, 8(1):1--27, 2002.

\bibitem{ang1987singularities}
M.H. Ang and V.D. Tourassis.
\newblock Singularities of {E}uler and roll-pitch-yaw representations.
\newblock {\em IEEE Transactions on Aerospace and Electronic Systems}, pages
  317--324, 1987.

\bibitem{ayusawa2008identification}
K.~Ayusawa, G.~Venture, and Y.~Nakamura.
\newblock Identification of humanoid robots dynamics using floating-base motion
  dynamics.
\newblock {\em IEEE/RSJ International Conference on Intelligent Robots and
  Systems}, pages 2854--2859, 2008.

\bibitem{barrau2014intrinsic}
A.~Barrau and S.~Bonnabel.
\newblock Intrinsic filtering on lie groups with applications to attitude
  estimation.
\newblock {\em IEEE Transactions on Automatic Control}, 60(2):436--449, 2014.

\bibitem{bos2019}
M.P. Bos.
\newblock Efficient geometric sensitivity analysis of moving-base multibody
  dynamics.
\newblock {\em Master's thesis, Eindhoven University of Technology}, 2019.

\bibitem{Bruls2008linearization}
O.~Brüls and P.~Eberhard.
\newblock Sensitivity analysis for dynamic mechanical systems with finite
  rotations.
\newblock {\em Int. J. Numer. Methods Eng.}, 74(13):1897–1927, 2008.

\bibitem{carpentier2018analyticalinverse}
J.~Carpentier.
\newblock Analytical inverse of the joint space inertia matrix.
\newblock {\em available online at hal.laas.fr}, 2018.

\bibitem{carpentier2018analytical}
J.~Carpentier and N.~Mansard.
\newblock Analytical derivatives of rigid body dynamics algorithms.
\newblock {\em Robotics: Science and Systems (RSS)}, 2018.

\bibitem{Deray12020Manif}
J.~Deray and Solà J.
\newblock Manif: A micro lie theory library for state estimation in robotics
  applications.
\newblock {\em Journal of Open Source Software}, 5(46):1371, 2020.

\bibitem{diebel2006representing}
J.~Diebel.
\newblock Representing attitude: Euler angles, unit quaternions, and rotation
  vectors.
\newblock {\em Matrix}, 58(15-16):1--35, 2006.

\bibitem{Docquier2019multibody}
Q.~Docquier, O.~Brüls, and P.~Fisette.
\newblock Comparison and analysis of multibody dynamics formalisms for solving
  optimal control problem.
\newblock In {\em IUTAM Symposium on Intelligent Multibody Systems –
  Dynamics, Control, Simulation. IUTAM Bookseries}, volume~33, pages 55--77.
  Springer, Cham., 2019.

\bibitem{murphey2015}
T.~Fan and T.~Murphey.
\newblock Structured linearization of discrete mechanical systems on lie
  groups: A synthesis of analysis and control.
\newblock In {\em 2015 54th IEEE Conference on Decision and Control (CDC)},
  pages 1092--1099, 2015.

\bibitem{farshidian2017real}
F.~Farshidian, E.~Jelavic, A.~Satapathy, M.~Giftthaler, and J.~Buchli.
\newblock Real-time motion planning of legged robots: A model predictive
  control approach.
\newblock {\em IEEE-RAS 17th International Conference on Humanoid Robotics
  (Humanoids)}, pages 577--584, 2017.

\bibitem{featherstone2008rigid}
R.~Featherstone.
\newblock {\em Rigid body dynamics algorithms}.
\newblock Springer, 2008.

\bibitem{from2010singularity}
P.J. From, V.~Duindam, K.Y. Pettersen, J.T. Gravdahl, and S.~Sastry.
\newblock Singularity-free dynamic equations of vehicle--manipulator systems.
\newblock {\em Simulation Modelling Practice and Theory}, 18(6):712--731, 2010.

\bibitem{garofalo2013closed}
G.~Garofalo, C.~Ott, and A.~Albu-Schäffer.
\newblock On the closed form computation of the dynamic matrices and their
  differentiations.
\newblock {\em IEEE/RSJ International Conference on Intelligent Robots and
  Systems}, pages 2364--2359, 2013.

\bibitem{geoffroy2014inverse}
P.~Geoffroy, N.~Mansard, M.~Raison, S.~Achiche, and E.~Todorov.
\newblock From inverse kinematics to optimal control.
\newblock In J.~Lenar{\v{c}}i{\v{c}} and C.~Galletti, editors, {\em Advances in
  Robot Kinematics}, pages 409--418. Springer, 2014.

\bibitem{giftthaler2017automatic}
M.~Giftthaler, M.~Neunert, M.~Stäuble, M.~Frigerio, C.~Semini, and J.~Buchli.
\newblock Automatic differentiation of rigid body dynamics for optimal control
  and estimation.
\newblock {\em Advanced Robotics}, 31(22):1225--1237, 2017.

\bibitem{khalil2002nonlinear}
H.K. Khalil.
\newblock {\em Nonlinear systems}.
\newblock Prentice hall, 2002.

\bibitem{Kheddar2019comanoid}
A.~{Kheddar}, S.~{Caron}, P.~{Gergondet}, A.~{Comport}, A.~{Tanguy}, C.~{Ott},
  B.~{Henze}, G.~{Mesesan}, J.~{Englsberger}, M.~A. {Roa}, P.~{Wieber},
  F.~{Chaumette}, F.~{Spindler}, G.~{Oriolo}, L.~{Lanari}, A.~{Escande},
  K.~{Chappellet}, F.~{Kanehiro}, and P.~{Rabaté}.
\newblock Humanoid robots in aircraft manufacturing: The airbus use cases.
\newblock {\em IEEE Robotics Automation Magazine}, 26(4):30--45, 2019.

\bibitem{koenemann2015whole}
J.~Koenemann, A.~Del~Prete, Y.~Tassa, E.~Todorov, O.~Stasse, M.~Bennewitz, and
  N.~Mansard.
\newblock Whole-body model-predictive control applied to the {HRP}-2 humanoid.
\newblock {\em IEEE/RSJ International Conference on Intelligent Robots and
  Systems (IROS)}, pages 3346--3351, 2015.

\bibitem{lee2013smooth}
J.M. Lee.
\newblock Smooth manifolds.
\newblock In {\em Introduction to Smooth Manifolds}, pages 1--31. Springer,
  2013.

\bibitem{lee2008computational}
T.~Lee, M.~Leok, and N.H. McClamroch.
\newblock Computational geometric optimal control of rigid bodies.
\newblock {\em Communications in Information and Systems}, 8(4):445--472, 2008.

\bibitem{LeeLeokmcClamroch2017GlobalFormulationLagrangianHamiltonianDynamics}
T.~Lee, M.~Leok, and N.H. McClamroch.
\newblock {\em {Global Formulations of Lagrangian and Hamiltonian Dynamics on
  Embedded Manifolds}}.
\newblock Springer, 2017.

\bibitem{leok2007overview}
M.~Leok.
\newblock An overview of lie group variational integrators and their
  applications to optimal control.
\newblock In {\em International conference on scientific computation and
  differential equations}, page~1. The French National Institute for Research
  in Computer Science and Control, 2007.

\bibitem{luh1980line}
J.Y.S. Luh, M.W. Walker, and R.P.C. Paul.
\newblock On-line computational scheme for mechanical manipulators.
\newblock {\em J. DYN. SYS. MEAS. \& CONTR.}, 102(2):69--76, 1980.

\bibitem{marco2016automatic}
A.~Marco, P.~Hennig, J.~Bohg, S.~Schaal, and S.~Trimpe.
\newblock {Automatic LQR tuning based on Gaussian process global optimization}.
\newblock {\em IEEE International Conference on Robotics and Automation
  (ICRA)}, 2016.

\bibitem{mason2014full}
S.~Mason, L.~Righetti, and S.~Schaal.
\newblock Full dynamics lqr control of a humanoid robot: An experimental study
  on balancing and squatting.
\newblock {\em IEEE-RAS International Conference on Humanoid Robots}, pages
  374--379, 2014.

\bibitem{murray2017mathematical}
R.M. Murray.
\newblock {\em A mathematical introduction to robotic manipulation}.
\newblock CRC press, 2017.

\bibitem{Negrut2006}
D.~Negrut and J.~Ortiz.
\newblock A practical approach for the linearization of the constrained
  multibody dynamics equations.
\newblock {\em Comput. Nonlinear Dynam.}, 1(3):230--239, 2006.

\bibitem{neunert2016fast}
M.~Neunert, M.~Giftthaler, M.~Frigerio, C.~Semini, and J.~Buchli.
\newblock Fast derivatives of rigid body dynamics for control, optimization and
  estimation.
\newblock {\em IEEE International Conference on Simulation, Modeling, and
  Programming for Autonomous Robots (SIMPAR)}, pages 91--97, 2016.

\bibitem{park2018geometric}
F.C. Park, B.~Kim, C.~Jang, and J.~Hong.
\newblock Geometric algorithms for robot dynamics: A tutorial review.
\newblock {\em Applied Mechanics Reviews}, 70(1):010803, 2018.

\bibitem{posa2014direct}
M.~Posa, C.~Cantu, and R.~Tedrake.
\newblock A direct method for trajectory optimization of rigid bodies through
  contact.
\newblock {\em The International Journal of Robotics Research}, 33(1):69--81,
  2014.

\bibitem{pucci2016automatic}
D.~Pucci, G.~Nava, and F.~Nori.
\newblock Automatic gain tuning of a momentum based balancing controller for
  humanoid robots.
\newblock In {\em 2016 IEEE-RAS 16th International Conference on Humanoid
  Robots (Humanoids)}, pages 158--164. IEEE, 2016.

\bibitem{pucci2018momentum}
D.~Pucci, S.~Traversaro, and F.~Nori.
\newblock Momentum control of an underactuated flying humanoid robot.
\newblock {\em IEEE Robotics and Automation Letters}, 3(1):195--202, 2018.

\bibitem{rossmann2002lie}
W.~Rossmann.
\newblock {\em Lie groups: an introduction through linear groups}, volume~5.
\newblock Oxford University Press, 2002.

\bibitem{saccon2011lie}
A.~Saccon, A.P. Aguiar, and J.~Hauser.
\newblock Lie group projection operator approach: Optimal control on {T SO}
  (3).
\newblock {\em IEEE Decision and Control and European Control Conference
  (CDC-ECC)}, pages 6973--6978, 2011.

\bibitem{Saccon2010ProntoLieCDC}
A.~Saccon, J.~Hauser, and A.P. Aguiar.
\newblock Optimal control on non-compact lie groups: A projection operator
  approach.
\newblock In {\em 49th IEEE Conference on Decision and Control (CDC)}, pages
  7111--7116, 2010.

\bibitem{saccon2013optimal}
A.~Saccon, J.~Hauser, and A.P. Aguiar.
\newblock Optimal control on {L}ie groups: The projection operator approach.
\newblock {\em IEEE Transactions on Automatic Control}, 58(9):2230--2245, 2013.

\bibitem{saccon2012trajectory}
A.~Saccon, J.~Hauser, and A.~Beghi.
\newblock Trajectory exploration of a rigid motorcycle model.
\newblock {\em IEEE Transactions on Control Systems Technology},
  20(2):424--437, 2012.

\bibitem{saccon2017centroidal}
A.~Saccon, S.~Traversaro, F.~Nori, and H.~Nijmeijer.
\newblock On centroidal dynamics and integrability of average angular velocity.
\newblock {\em IEEE Robotics and Automation Letter}, 2(2):943--950, 2017.

\bibitem{Sanyal2008}
A.K. Sanyal, T.~Lee, M.~Leok, and N.H. McClamroch.
\newblock Global optimal attitude estimation using uncertainty ellipsoids.
\newblock {\em Systems \& Control Letters}, 57(3):236--245, 2008.

\bibitem{sohl2001recursive}
G.A. Sohl and J.E. Bobrow.
\newblock A recursive multibody dynamics and sensitivity algorithm for branched
  kinematic chains.
\newblock {\em Journal of Dynamic Systems, Measurement, and Control},
  123(3):391--399, 2001.

\bibitem{sola2018micro}
J.~Solà, J.~Deray, and D.~Atchuthan.
\newblock A micro {L}ie theory for state estimation in robotics.
\newblock {\em arXiv preprint arXiv:1812.01537}, 2018.

\bibitem{sonneville2014sensitivity}
V.~Sonneville and O.~Brüls.
\newblock Sensitivity analysis for multibody systems formulated on a {L}ie
  group.
\newblock {\em Multibody System Dynamics}, 31(1):47--67, 2014.

\bibitem{tassa2012synthesis}
Y.~Tassa, T.~Erez, and E.~Todorov.
\newblock Synthesis and stabilization of complex behaviors through online
  trajectory optimization.
\newblock {\em IEEE/RSJ International Conference on Intelligent Robots and
  Systems}, pages 4906--4913, 2012.

\bibitem{traversaro2019multibodyv2}
S.~Traversaro and A.~Saccon.
\newblock Multibody dynamics notation, revision 2.
\newblock {\em available online at tue.research.nl}, 2019.

\bibitem{varadarajan2013lie}
V.S. Varadarajan.
\newblock {\em Lie groups, Lie algebras, and their representations}, volume
  102.
\newblock Springer, 2013.

\bibitem{walker1982efficient}
M.W. Walker and D.E. Orin.
\newblock Efficient dynamic computer simulation of robotic mechanisms.
\newblock {\em Journal of Dynamic Systems, Measurement, and Control},
  104(3):205--211, 1982.

\end{thebibliography}

\medskip
Received xxxx 20xx; revised xxxx 20xx; early access xxxx 20xx.
\medskip

\end{document}